\newcommand{\trace}{\operatorname{tr}}
\declaretheorem[name=Theorem,style=examplestyle]{theorem}
\declaretheorem[name=Lemma,style=examplestyle]{lemma}
\declaretheorem[name=Corollary,style=examplestyle]{corollary}
\declaretheorem[name=Proposition,style=examplestyle]{proposition}
\declaretheorem[name=Definition,style=examplestyle]{definition}
\declaretheorem[name=Remark,style=examplestyle]{remark}
\renewcommand{\parallel}{{\mathrel{\vcenter{\offinterlineskip \hbox{$\scriptscriptstyle/$}\vskip-.5ex\hbox{$\scriptscriptstyle/$}}}}}
\author[1,*]{Reyhane Askari-Hemmat}
\author[1,*]{Mohammad Pezeshki}
\author[1, 2, 5]{Elvis Dohmatob}
\author[1]{Florian Bordes}
\author[1]{Pietro Astolfi}
\author[1]{Melissa Hall}
\author[1]{Jakob Verbeek}
\author[1]{Michal Drozdzal}
\author[1, 2, 3, 4]{Adriana Romero-Soriano}
\affiliation[1]{FAIR at Meta - Montreal, Paris, and New York City labs}
\affiliation[2]{Mila}
\affiliation[3]{McGill University}
\affiliation[4]{Canada CIFAR AI chair}
\affiliation[5]{Concordia University
}
\abstract{
Inspired by the principle of deliberate practice in human learning, we propose Deliberate Practice for Synthetic Data Generation (DP), a novel framework that improves sample efficiency through dynamic synthetic data generation. Prior work has shown that scaling synthetic data is inherently challenging, as naively adding new data leads to diminishing returns. To address this, pruning has been identified as a key mechanism for improving scaling, enabling models to focus on the most informative synthetic samples. Rather than generating a large dataset and pruning it afterward, DP efficiently approximates the direct generation of informative samples. We theoretically show how training on challenging, informative examples improves scaling laws and empirically validate that DP achieves better scaling performance with significantly fewer training samples and iterations. On ImageNet-100, DP generates 3.4$\times$ fewer samples and requires six times fewer iterations, while on ImageNet-1k, it generates 8$\times$ fewer samples with a 30$\%$ reduction in iterations, all while achieving superior performance compared to prior work.
}
\date{\today}
\title{Improving the Scaling Laws of Synthetic Data with Deliberate Practice}
\begin{document}

\maketitle

\section{Introduction}

A key principle underlying learning in human is deliberate practice (DP)—progress is made not by repeating what is already known but by continuously engaging with tasks that stretch the limits of one’s abilities~\citep{ericsson1993role}. For example, when learning to play the guitar, simply practicing songs that one has mastered does little to improve skill. Instead, targeted practice on challenging tasks and refining learning through feedback, leads to real progress. This principle highlights that effective learning requires exposure to informative and difficult examples rather than passive repetition.

In contrast, most machine learning models are trained on pre-collected data that remain static throughout training, limiting their ability to dynamically adapt to their own weaknesses. One promising source of data for visual recognition tasks is large-scale pre-trained text-to-image models~\citep{rombach2022high}. They provide an essentially infinite source of synthetic training data, presenting an alternative to real-world datasets, which are often expensive or infeasible to curate~\citep{hemmat2023feedback, shin2023fill, zhang2024expanding}. With the great promise of text-to-image models, a natural question arises: what is the potential of learning using \textbf{only} synthetic data? Empirical studies show that increasing the volume of synthetic training data often leads to diminishing returns, with performance gains following a power law stagnation~\citep{fan2024scaling, tian2024learning}. Instead, pruning to remove uninformative examples has proven effective in improving the effectiveness of training with real or synthetic data~\citep{sorscher2022beyond,kolossov2024towards, feng2024modelcollapsescalingsynthesized}. 

Inspired by human learning principles and recent advances in generative image models, we propose the Deliberate Practice (DP) for Synthetic Data Generation framework. Unlike static approaches that generate all synthetic training data upfront~\citep{fan2024scaling, shin2023fill, hemmat2023feedback}, our framework incorporates a dynamic loop between a diffusion model and a downstream learner throughout the training. More concretely, rather than generating an entire dataset at once and irrespective of the learner and then pruning it to remove uninformative samples, we propose DP to efficiently \emph{generate data directly from the pruned distribution of informative samples}. By leveraging the learner's prediction entropy to guide the generation process, our approach generates only the most challenging and informative training examples.

Our framework operates \textbf{dynamically}: we begin with an initial set of synthetic data and train a learner until performance on a real validation set plateaus. At this point, the learner's entropy is used to guide the diffusion model to generate new challenging examples. These examples are added to the training set, and the process repeats, ensuring that the model is continually exposed to increasingly informative data throughout training.

This approach aligns with broader goals in machine learning, such as interactive learning environments, continual learning~\citep{kirkpatrick2017overcoming}, and active learning \citep{settles2009active}. By leveraging a dynamic loop, Deliberate Practice reduces inefficiencies from redundant or already learned data, thereby improving the scaling laws of training with synthetic data.

Our contributions are summarized as:
\begin{itemize}[noitemsep]
    \item We introduce the \textit{Deliberate Practice for Synthetic Data Generation} framework, which dynamically adds new data points when the learner's validation accuracy plateaus [Section~\ref{sec:dp}]. Our framework leverages the learner's prediction entropy to generate\textbf{ challenging synthetic data}, improving the scaling behavior of synthetic data (Figures \ref{fig:diagram} and \ref{fig:scaling-laws}).
    \item We provide a theoretical analysis of the scaling behavior of a simple model trained on selected examples (Section~\ref{sec:3}). Using random matrix theory, we characterize the test error as a function of data size and the example selection function, showing \textbf{improved scaling when prioritizing hard and informative examples}.
    \item We show that entropy-guided sampling approximates generating from an entropy-pruned distribution (Section \ref{sec:2}). We empirically validate that DP can improve the validation accuracy compared to direct pruning while being remarkably \textbf{cheaper in compute up to 5$\times$} (Figure \ref{fig:explicit_prune_vs_DP}).
    \item We demonstrate that DP outperforms prior work on both ImageNet-100 and ImageNet-1k while requiring significantly less data and fewer training iterations. On ImageNet-100, our approach generated \textbf{3.4$\times$ less samples} and completed training in only one-sixth of the iterations used in prior work, yet still achieved superior performance. Similarly, on ImageNet-1k, we generated \textbf{8$\times$ less samples} and reduced the number of iterations by 30\%, while outperforming previous results (Table~\ref{tab:compare}).
    \item Furthermore, DP exhibits strong performance on \textbf{out-of-distribution} (OOD) datasets, even outperforming models trained with real data on ImageNet-R and ImageNet-Sketch, with \textbf{improvements of up to 15\%} (Table~\ref{tab:compare}).
\end{itemize}

\begin{figure}[t!]
    \centering
\includegraphics[width=1.0\linewidth]{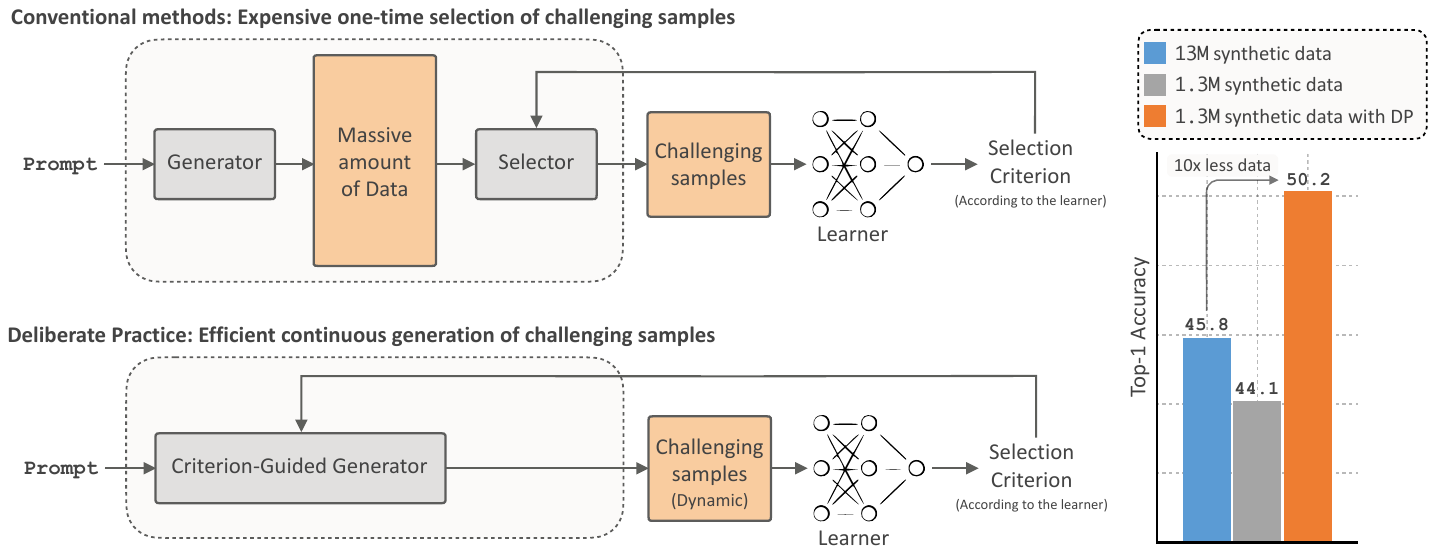}
    \vspace{-0.5cm}
    \caption{
    (\textbf{Top}): Conventional approaches generate (or collect) a massive static dataset and then select challenging examples in a one-time filtering step based on the learner’s selection criterion. This is inefficient, as most generated data is discarded. (\textbf{Bottom}): DP \textbf{continuously generates only the most challenging examples} based on continuous feedback from the learner, eliminating the need for large-scale data pruning. This iterative process ensures that training focuses on progressively informative examples, improving efficiency and performance. (\textbf{Right}): Top-1 validation accuracy on ImageNet-1k with models trained solely on synthetic data. DP (orange) achieves higher accuracy than the 13M synthetic data setup (blue) while using \textbf{10× fewer samples}, significantly outperforming the 1.3M baseline (gray).
    }
    \vspace{-0.2cm}
    \label{fig:diagram}
\end{figure}

\section{Problem Formulation}
\label{sec:2}

\paragraph{Problem Setup.} 
Standard supervised learning relies on a large real labeled training set. Here, however, we assume no real training data is available, and instead, we must rely on a generative model to synthesize training examples.

Formally, let \( \mathcal{Y} \) denote the set of class labels. Our goal is to train a classifier \( f_\phi: \mathcal{X} \to \mathcal{Y} \), parameterized by \( \phi \), which maps inputs \( x \in \mathcal{X} \) (\textit{e.g.}, images) to labels \( y \in \mathcal{Y} \). We are given a predefined label set \( \mathcal{Y} \), a fixed (small) validation set \( \mathcal{D}^{\text{val}} = \{(x_i, y_i)\}_{i=1}^n \) consisting of real data for evaluation, and a generative model \( g_\theta \) capable of sampling synthetic data conditioned on a label, \textit{i.e.}, \( x \sim g_\theta(y) \). However, no real training data is available, \textit{i.e.}, \( \mathcal{D}^{\text{tr}} = \varnothing \). The objective is to train \( f_\phi \) using \emph{as few generated examples as possible} while maximizing generalization to real data as measured by performance on \( \mathcal{D}^{\text{val}} \). The key challenge is to generate minimal yet effective training data, requiring a principled mechanism to select/generate  informative examples.

\paragraph{The Need for Informative Examples.}

Not all synthetic samples contribute equally to learning. Prior work shows that simply increasing the synthetic dataset size leads to diminishing returns, as many generated samples are redundant or too easy ~\citep{fan2024scaling}. Instead, training should focus on examples that maximize learning efficiency.

Given a measure of \textit{informativeness} for a synthetic sample \( x \), one approach is to generate a large dataset and \textbf{prune uninformative examples}. Formally, let \( \mathcal{D}^{\text{pool}} = \{(x_i, y_i)\}_{i=1}^N \) be a large set of $N$ generated samples. We define a \emph{pruned dataset} as $\mathcal{D}' := \{(x_i, y_i) \mid i \in [N], q_i = 1\}$, where $q_i \in \{0,1\}$ is a selection variable determining whether a data point $(x_i, y_i) \in \mathcal{D}^{\text{pool}}$ is retained. The subset size is constrained by \( m = \sum_{i=1}^N q_i \). The quantity $N / m$ is referred to as the over-sampling ratio.

Let $P$ and $Q$ denote the distributions of the original and pruned datasets, respectively. The pruning process operates as an importance sampling scheme:
\begin{equation}
    \label{eq:importance}
    \mathrm{d} Q = \pi \, \mathrm{d} P,
\end{equation}
where $\pi$ is a normalized weighting function that retains the  informative samples. The generate-then-prune approach ensures that only informative examples are kept, it is \textbf{computationally inefficient}, as many generated samples are discarded. This motivates the need to devise mechanisms to directly sample the informative examples.

\paragraph{Approximate  Sampling of Informative Examples.}
Suppose that \( \mathcal{D}^{\text{pool}} \) is generated using a diffusion model with induced probability \( P \). The generative process is governed by a reverse SDE~\citep{song2019generative}:
{
\begin{equation}
\begin{aligned}
    \mathrm{d} x &= \left[v(x, t) - g(t)^2 \nabla \log p_t(x) \right] \mathrm{d} t + g(t) \, \mathrm{d} W(t),
\end{aligned}
\label{eq:standard-reverse}
\end{equation}}
where \( W(t) \) is a Wiener process, modeling stochastic noise, \( v(x, t) \) is a drift term, \( g(t) \) is a coefficient controlling the noise level at time \( t \), and \( \nabla \log p_t(x) \) is the score function.

Instead of sampling from \( P \), we aim to sample directly from \( Q \) as in Eq. \eqref{eq:importance}. By Girsanov’s theorem \citep{oksendal2013stochastic}, modifying the probability measure from \( P \) to \( Q \) introduces a correction term in the reverse SDE:
{
\begin{equation}
\begin{aligned}
    \mathrm{d} x &= \left[v(x, t) - g(t)^2 (\nabla \log p_t(x) + \nabla \log \pi(x, t)) \right] \mathrm{d} t
     + g(t) \, \mathrm{d} W(t).
\end{aligned}
\label{eq:modified-reverse}
\end{equation}}

The term \( \nabla \log \pi(x, t) \) effectively modifies the score function and biases the sampling distribution according to the weighting function \( \pi(x, t) \).  This modification allows approximating direct sampling from the pruned distribution \( Q \), eliminating the need to first sample uniformly from \( P \) and later prune the data.

\subsection{Efficient Entropy-Guided Sampling with DDIM.} 

We leverage denoising diffusion implicit models (DDIMs)~\citep{song2020denoising} for efficient sampling. At each step \( t \), the reverse update for generating a conditional sample is:
{
\begin{equation*}
    x_{t-1} = \sqrt{\xi_{t-1}} \hat{x}_{0, t} + \underbrace{\sqrt{1 - \xi_{t-1} - \sigma_t^2} \cdot \epsilon_\theta^{(t)}(x_t, y)}_{\text{direction pointing to } x_t} + \underbrace{\sigma_t \epsilon_t}_{\text{random noise}},
\end{equation*}}
where \( \epsilon_t \) is random noise and \( \sigma_t \) and \( \xi_{t-1} \) are time-dependent coefficients. The term \( \hat{x}_{0, t} \) approximates the final denoised sample:
\begin{equation}
    \hat{x}_{0, t} = \frac{x_t - \sqrt{1 - \xi_t} \epsilon_\theta^{(t)}(x_t, y)}{\sqrt{\xi_t}},
\end{equation}
in which $\epsilon_\theta^{(t)}(x_t, y)$ approximates the conditional score function using a pretrained denoising network~\citep{ho2022classifier}:
{
\begin{equation}
    \epsilon_\theta(x_t, y) \approx (1+\lambda)\tilde{\epsilon}_\theta(x, y) - \lambda \tilde{\epsilon}_{\theta}(x)
\end{equation}
}
where $\lambda$ is called the classifier-free guidance coefficient which controls the strength of conditional sampling on the label.\looseness-1

An efficient way of sampling from a modified diffusion mode as described in Eq.~\ref{eq:modified-reverse} was proposed by~\citet{hemmat2023feedback}, where the weighting function is derived from the entropy of the downstream learner, such that,
{
\begin{equation}
    \label{eq:ent}
    \log \pi \propto H(f_\phi(x_0)) = - \sum_{y \in \mathcal{Y}} f_\phi(y \mid x_0) \log f_\phi(y \mid x_0).
\end{equation}}
To compute the entropy as in Eq. \ref{eq:ent}, we need the denoised sample $x_0$. The term \( \hat{x}_{0, t} \) can be used to cheaply approximate entropy mid-generation. This allows direct sampling of high-entropy examples by modifying the score function:
{
\begin{equation}
    \tilde{\epsilon}_\theta^{(t)}(x_t, y) = \epsilon_\theta^{(t)}(x_t, y) + \omega \nabla_{x_t} H(f_\phi(\hat{x}_{0, t})),
\end{equation}}
where $\omega$ controls the contribution of the entropy-guidance.

In \cite{hemmat2023feedback}, real data is used to pre-train the learner, enabling an accurate estimation of \( \nabla_{x_t} H(f_\phi(\hat{x}_{0, t})) \). However, when real data is unavailable, alternative approaches are needed to assess sample informativeness. In the next section, we propose to leverage the learner itself during training to evaluate entropy and determine the informativeness of generated samples dynamically.

\begin{figure}[ht]
    \centering
\includegraphics[width=1.0\linewidth]{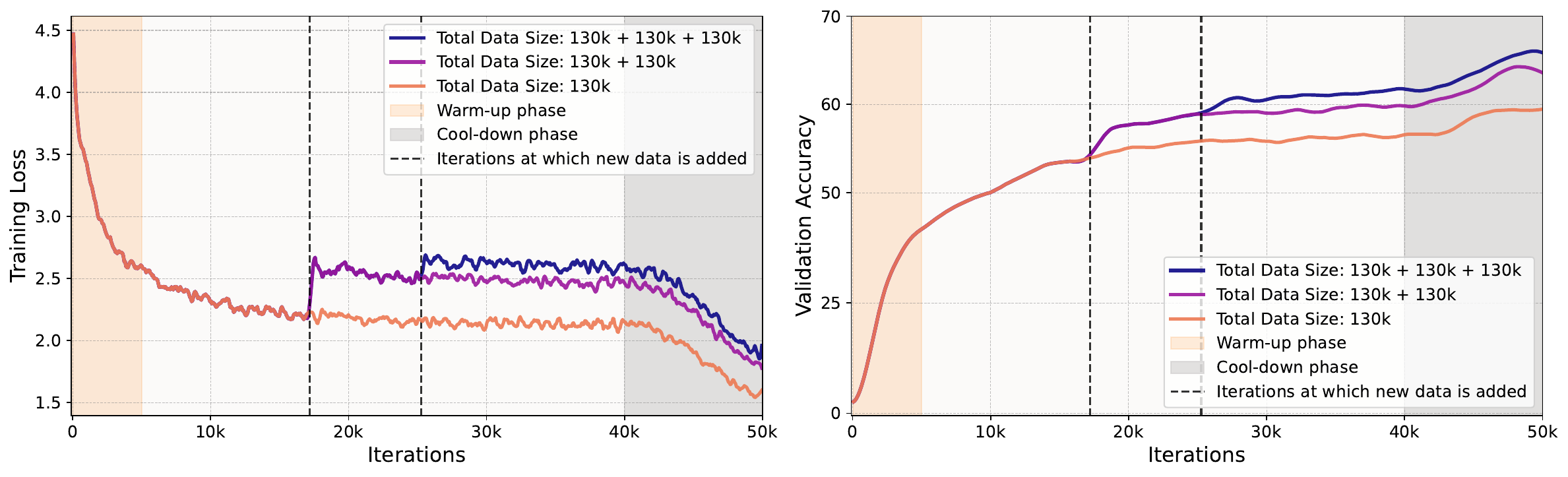}
\caption{
    Training loss (left) and validation accuracy (right) of Deliberate Practice on ImageNet-100.
    The classifier begins training on an initial static dataset (130k samples) until validation accuracy plateaus. At this point, additional samples are generated using entropy-guided sampling, focusing on hard/informative examples. The two dashed vertical lines indicate points where new data is added.
    We compare three setups: 
    (1) \textcolor[rgb]{0.8, 0.47, 0.39}{Orange}: No additional data is added, training only on the initial dataset. 
    (2) \textcolor[rgb]{0.6, 0.23, 0.64}{Purple}: One round of entropy-guided data generation adds 130k samples. 
    (3) \textcolor[rgb]{0.08, 0.04, 0.62}{Blue}: Two rounds of entropy-guided data generation, adding 260k samples in total. 
    Each data addition leads to an accuracy boost, demonstrating the effectiveness of DP in improving performance with fewer training iterations.
    For clarity, this figure shows only two rounds of data addition, but in practice, more rounds occur based on the allowed maximum patience. 
    Notably, while training loss increases with new data, validation accuracy steadily improves, showing that the model benefits from progressively challenging examples, ultimately reducing the generalization gap.
}
    \label{fig:branch_outs}
\end{figure}
\section{The deliberate Practice Framework for Synthetic Data Generation}\label{sec:dp}

In this section, we describe our Deliberate Practice framework, in which we efficiently train the learner with synthetic data in absence of any real data. In particular, we move to a  setup where we dynamically expand the dataset throughout the training.
Our framework is summarized in Algorithm~\ref{alg:framework}.

\begin{algorithm}
  \caption{Deliberate Practice for Synthetic Data Generation}
  \label{alg:framework}
  \textbf{Input:} Class labels $\mathcal{Y}$, Generative model $g_\theta$, Validation set $\mathcal{D}^{\text{val}}$, Initial dataset size $N$, New data size $P$, Patience $T_{\text{max}}$, Evaluation interval $\tau$. \\
  \textbf{Output:} Trained classifier $f_\phi$
  \begin{algorithmic}[1]
    \State \textbf{Initialize:} Generate $\mathcal{D}_0^{\text{tr}}$ with $N$ examples from $g_\theta$. Start training $f_\phi$ with learning-rate warm-up.
    \State Set patience counter $T \gets 0$.
    \While{training}
        \State Update $f_\phi$ on a mini-batch drawn uniformly from $\mathcal{D}_k^{\text{tr}}$.
        \If{(every $\tau$ iterations)}
            \State Evaluate validation accuracy $\mathcal{A}(f_\phi, \mathcal{D}^{\text{val}})$.
            \State Reset $T \leftarrow 0$ if accuracy improves; else increment $T \leftarrow T + 1$.
        \EndIf
        \If{$T \geq T_{\text{max}}$}
            \State Generate $P$ new examples $\mathcal{D}_{\text{new}}$ with feedback:
            \[
            \nabla_{z_t} \log p(x_t | y) = \nabla_{z_t} \log p_\theta(z_t) + \omega \nabla_{z_t} H(f_\phi(\hat{x}_{0, t}))
            \]
            \State Augment training set: $\mathcal{D}_{k+1}^{\text{tr}} \gets \mathcal{D}_k^{\text{tr}} \cup \mathcal{D}_{\text{new}}$.
            \State Reset $T \gets 0$.
        \EndIf
    \EndWhile
    \State \textbf{Finalize:} Apply learning rate decay.
  \end{algorithmic}
\end{algorithm}

\textbf{The initial training data.}
The framework begins by generating an initial set of $N$ synthetic training examples $\mathcal{D}^{{tr}}_0 = \{(x_i, y_i)\}_{i=1}^N$ using a pre-trained generative model $g_\theta$. For each class $y_i \in \mathcal{Y}$, the generative model samples  images $x_i \sim g_\theta(y_i)$ in a class-conditional manner. The classifier $f_\phi$ starts training on this dataset, with a learning-rate warm-up phase.

\textbf{Iterative training and additional data.}
Training proceeds iteratively with a mechanism to dynamically augment the dataset whenever the classifier's performance stagnates. The process alternates between training the classifier and generating new synthetic examples.

\textbf{Patience mechanism.} 
At regular iteration intervals, $\tau$, the validation accuracy $\mathcal{A}(f_\phi, \mathcal{D}^{\text{val}})$ is evaluated. If no improvement is observed for $T_{\text{max}}$ intervals (patience threshold), the framework triggers new data generation. 

\textbf{Entropy guided sampling.}  
When the patience mechanism triggers, $P$ new examples $\mathcal{D}_{\text{new}} = \{(x_j, y_j)\}_{j=1}^P$ are generated. We directly generate samples from the entropy pruned distribution through entropy guided sampling. The entropy is computed based on the current stage of the classifier $f_\phi$. The $\omega$ coefficient controls the effect of entropy-guidance. With $\omega=0$, we fall back into regular sampling of diffusion models, while  $\omega>0$ results in generations that have a higher entropy under the classifier.

\textbf{Training resumption.}  
The newly generated examples are added to the dataset, $\mathcal{D}_{k+1}^{\text{tr}} = \mathcal{D}_k^{\text{tr}} \cup \mathcal{D}_{\text{new}}$. After augmenting the dataset, training resumes with a constant learning rate until the patience mechanism is triggered again. Mini-batches are drawn uniformly from the updated pool, which grows dynamically from size $N$ to $N + kP$ after $k$ iterations of augmentation. This cycle is continued until we reach the cool-down phase where the learning rate is decreased and no more new data is added. See Figure~\ref{fig:branch_outs} for training dynamics of a classifier training with DP.

In Section \ref{sec:3}, we provide an intuitive theoretical framework to study the scaling behavior of a simplified DP. In Section \ref{sec:exp}, we validate the effectiveness of DP in large-scale experiments.

\section{Training on informative examples improves the scaling laws}\label{sec:3}

Before presenting empirical results, we first analyze how selecting informative examples affects the scaling of synthetic data. We study a high-dimensional linear classifier trained with uniform vs. selective sampling and derive an analytic expression for test error using random matrix theory (RMT). Our results show that selecting hard examples improves scaling laws, providing theoretical justification for our approach.

\subsection{Theoretical Analysis under an Idealized Setup.}

Consider a simple generative model for training data:
\begin{equation}
    x \sim \mathcal{N}(0, \Sigma), \quad y = \text{sign}(w_0^\top x),
\end{equation}
where $w_0 \in \mathbb{R}^d$ is the ground-truth labeling function. This gives a distribution $P$ on $\mathbb R^{d} \times \mathbb R$.

We study the impact of \textit{uniform sampling} versus \textit{selective sampling} of informative examples on generalization. To formalize this, we assume a pool of $n$ i.i.d. training pairs:
{
\begin{equation}
    X \in \mathbb{R}^{n \times d}, \quad Y \in \mathbb{R}^{n}.
\end{equation}}
A linear classifier $\hat{w}$ is trained using the following loss:
\begin{equation}
    \hat{w} = \underset{w}{\arg\min} \quad \frac{1}{n} \sum_{i=1}^{n} q_i \ell(w^\top x_i, y_i) + \frac{\lambda}{2} \|w\|^2.
\end{equation}
where $\ell(z, y) = (z - y)^2/2$ is the squared loss, $\lambda > 0$ is a regularization parameter, and $q_i := q(x_i^\top w_s)$ is a selection strategy that determines whether an example is included in training based on its projection in a given direction $w_s \in \mathbb R^d$, and an arbitrary measurable binary function $q:\mathbb R \to \{0,1\}$ which encodes the selection strategy.

The \textit{selection/pruning ratio} is given by:
{
\begin{equation}
    p = \mathbb{E}[q(x^\top w_s)]\text{ for }x \sim \mathcal{N}(0, \Sigma).
\end{equation}}
The resulting classifier has a closed-form solution:
\begin{equation}
    \hat{w} = \frac{1}{n} R X^\top D Y, \quad R := \left(\frac{1}{n} X^\top D X + \lambda I_d \right)^{-1},
    \label{eq:estimator}
\end{equation}
where $D \in \mathbb{R}^{n \times n}$ is a diagonal matrix with $D_{ii} = q_i$.

Our objective is to analyze the asymptotic test error of $\hat{w}$:
\begin{equation}
    E_{test}(\hat{w}) = \mathbb{P}(\text{sign}(x^\top \hat{w}) \ne y),
    \label{eq:Etest}
\end{equation}
where $(x, y)$ is a test example,

\subsection{Asymptotic Behavior of the Test Error.}
We leverage random matrix theory (RMT) techniques \citep{Couillet_Liao_2022, ZhenyuAndMahoney2021, Firdoussi2024} to characterize the test error in Eq. \eqref{eq:Etest}. Our analysis is based on the spectral density of the resolvent matrix $R$ in Eq. \eqref{eq:estimator}, allowing us to compute the first two moments of $yx^\top \hat{w}$ for a test sample $x$ and derive an expression for the test error. For simplicity, we assume an isotropic setup where $\Sigma = I_d$ and defer the general case to Appendix~\ref{app:theory}.

We shall work in the following so-called high-dimensional proportionate scaling regime
\begin{equation}
\label{eq:proportionate}
    d,n \to \infty,\quad d/n \to \phi,
\end{equation}
in which the input-dimension $d$ and the sample size $n$ diverge to infinity at the same rate.
The scalar $\phi \in (0,\infty)$ captures the effective dimensionality or over-parametrization rate of the problem.

\textbf{Key Scalars.}
WLOG, assume $\|w_s\|=1$. It turns out that the for fixed, pruning, $p$, the asymptotic test error is fully captured by the following scalars:
{
\begin{equation}
    \begin{split}
    \rho &:= w_s^\top w_0/\|w_0\|, \, \tau := \frac{\rho}{\sqrt{1-\rho^2}},\, \gamma := \mathbb{E}[q(G)G^2],\\
    \beta &:= 2\mathbb{E}[q(G)\varphi(\tau G)],\quad  \tilde \beta := 2\mathbb{E}[q(G)\Phi(\tau G)G],
\end{split}
\end{equation}
}
where $G \sim \mathcal N(0,1)$ with pdf $\varphi$ and cdf $\Phi$. Note that $\rho$ quantifies the alignment between the pruning direction $w_s$ and the ground-truth labeler $w_0$, while $\beta$ and $\gamma$ capture statistical properties of the pruning strategy $q$.

\textbf{Spectral functions.} The Stieltjes transform $m$ of the limiting spectral density of the resolvent matrix $R$ is shown in Lemma \ref{lm:mp} to be given by the exact formula (with $z:=-\lambda$)
\begin{equation}    
 m(z) = \frac{p-\phi-z - \sqrt{(p-\phi-z)^2-4\phi z}}{2\phi z},
\label{eq:meq-mp}
\end{equation}
and will play an important role in our theory.
 The above formula represents a somewhat distorted Marchenko-Pastur law. 
 Indeed, the classical MP \citep{MP1967} corresponds to $p \to 1$ (i.e. no data pruning).
 
We further define the following auxiliary functions:
{
\begin{align}
    s(z) &:= \gamma/(1+\phi m(z)),\quad
    \tilde m(z) := 1/(s(z)-z), \\
    r(z) &:= \omega^2\cdot m(z) + \tilde\omega ^2\cdot \tilde m(z),\\
    \text{with }\omega&:=\sqrt{1-\rho^2}\beta,\quad \tilde\omega := \rho\tilde\beta.
\end{align}}

\paragraph{Main Result: Test Error Scaling w.r.t Selection Strategy.}
\begin{theorem}
\label{thm:main}
   In the limit Eq. \eqref{eq:proportionate}, the classification test error satisfies: $E_{test}(\hat{w}) \to \Phi\left(-m_0/\sqrt{\nu_0 - m_0^2}\right)$, where
    \begin{align*}
        m_0 &:= \sqrt{2/\pi} \cdot r(-\lambda),\\
        \nu_0 &:= p\phi m'(-\lambda) + r'(-\lambda) - \frac{2\phi m'(-\lambda)}{1+\phi m(-\lambda)} r(-\lambda).
    \end{align*}
\end{theorem}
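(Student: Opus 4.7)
The plan is to reduce the test error $E_{test}(\hat w) = \mathbb{P}(\operatorname{sign}(x^\top\hat w)\ne y)$ to two scalar summaries of $\hat w$—namely $\alpha := \hat w^\top w_0/\|w_0\|$ and $\|\hat w\|^2$—compute their deterministic limits via resolvent / random matrix techniques, and plug into the stated $\Phi$ formula. Conditional on $\hat w$, the pair $(x^\top w_0, x^\top \hat w)$ is a zero-mean bivariate Gaussian, so a direct moment computation yields $\mathbb E_x[yx^\top\hat w\mid \hat w] = \sqrt{2/\pi}\,\alpha$ and $\mathbb E_x[(yx^\top\hat w)^2\mid \hat w] = \|\hat w\|^2$. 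Identifying these with $m_0$ and $\nu_0$, the Gaussian-moment approximation of $yx^\top\hat w$ produces the $\Phi(-m_0/\sqrt{\nu_0-m_0^2})$ form, which is standard in this RMT line of work and becomes exact in the relevant asymptotic regime.

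For the RMT part, starting from the closed form $\hat w = \frac{1}{n}R X^\top D Y$ with $R = (\frac{1}{n}X^\top D X+\lambda I_d)^{-1}$, I would decompose each row $x_i$ in the basis adapted to $\operatorname{span}(w_s,w_0)$ and its orthogonal complement. Because the selector depends only on $x_i^\top w_s$, each $q_i$ is a function of a single Gaussian coordinate, and $y_i$ reduces to $\operatorname{sign}(\rho\, x_i^\top w_s + \sqrt{1-\rho^2}\,g_i)$ for an auxiliary standard Gaussian $g_i$ independent of $x_i^\top w_s$. Integrating out these coordinates is exactly what produces the scalars in the theorem: $\gamma = \mathbb E[q(G)G^2]$ captures the effective weighted second moment used to build the Gram matrix, while $\beta = 2\mathbb E[q(G)\varphi(\tau G)]$ and $\tilde\beta = 2\mathbb E[q(G)\Phi(\tau G)G]$ arise from the Gaussian integrals $\mathbb E[q(x_i^\top w_s)\,y_i\,\cdot]$ along the two signal directions.

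Third, I would invoke Lemma~\ref{lm:mp} to obtain the fixed-point equation Eq.~\eqref{eq:meq-mp} for the Stieltjes transform $m$ of $\frac{1}{n}X^\top DX$—a distorted Marchenko–Pastur law reflecting the reweighting by $p=\mathbb E[q(G)]$—and its companion $\tilde m$, which captures the one-dimensional spike along $w_s$ via the Silverstein correction $s(z) = \gamma/(1+\phi m(z))$. Expanding $\hat w^\top w_0$ as a bilinear form in $R$ and applying standard deterministic-equivalent calculus, the $w_0$- and $w_s$-components of $\hat w$ converge respectively to $\sqrt{1-\rho^2}\,\beta\,m(-\lambda)$ and $\rho\,\tilde\beta\,\tilde m(-\lambda)$, summing to $r(-\lambda)$. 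The squared norm $\|\hat w\|^2$ is a quadratic form in $R$; using $R^2 = -\partial_z R$ it decomposes into three limits: the isotropic label-noise contribution $p\phi m'(-\lambda)$ from the complement of $\operatorname{span}(w_s,w_0)$, the squared-signal piece $r'(-\lambda)$, and a signal–resolvent cross correction $-2\phi m'(-\lambda)r(-\lambda)/(1+\phi m(-\lambda))$, reproducing the stated $\nu_0$.

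The main obstacle is the quadratic-form computation for $\|\hat w\|^2$: individual leave-one-out fluctuation terms are not small a priori, and the claimed value emerges only after the three contributions are carefully assembled in the $(w_s,w_0)$-adapted basis without double-counting the signal inside $R$. A secondary subtlety is the passage from the moments $(m_0,\nu_0)$ to the $\Phi$ expression: the exact bivariate-Gaussian test error $\frac{1}{\pi}\arccos(\alpha/\|\hat w\|)$ does not coincide identically with $\Phi(-m_0/\sqrt{\nu_0-m_0^2})$, so this final step must be justified either as the standard Gaussian-moment approximation used throughout the RMT literature or via an asymptotic equivalence that is specific to the limiting values derived above.
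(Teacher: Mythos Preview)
Your approach is essentially the paper's: both compute the first two moments of $yx^\top\hat w$ via the deterministic equivalent $\bar R=m(z)\Pi_\perp+\tilde m(z)\Pi$ and the vector $c=\mathbb E[q(x^\top w_s)yx]=\tilde\beta\,u+\beta\,v$ (the paper's Lemma~\ref{lm:means}), then assemble $\nu_0$ through leave-one-out on the double sum, with your three-term decomposition (``label-noise'', ``squared-signal'', ``cross correction'') matching exactly the diagonal, $A_1$, and $A_2+A_3$ pieces in the paper's Proposition~\ref{prop:testrep}. The only cosmetic difference is that you condition on $\hat w$ and track $(\alpha,\|\hat w\|^2)$ directly, whereas the paper integrates the test point out first and works with $\mu^\top\bar R c$ and $c^\top\Sigma' c$; these are identical because $\mathbb E_x[yx]=\sqrt{2/\pi}\,w_0/\|w_0\|$ and $\mathbb E_x[(x^\top\hat w)^2]=\|\hat w\|^2$, so your $(m_0,\nu_0)$ are the paper's $(m,\nu)$ up to the $(1+\delta)$ factors that cancel in the ratio. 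Your closing caveat about $\tfrac{1}{\pi}\arccos$ versus $\Phi$ is well placed: the paper's own proof (Proposition~\ref{prop:testrep}) simply asserts $yx^\top\hat w\overset{\mathcal L}{\to}\mathcal N(m,\nu-m^2)$ after computing two moments and does not address this step further, so you are not missing an argument the paper supplies.
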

The scaling behavior of test error is fully determined by the six scalars $(\lambda, \phi, p, \rho, \gamma, \beta,\tilde\beta)$. Importantly, the choice of the data point selection strategy $i \mapsto q(x_i^\top w_s)$ only influences performance through $\rho$, $\gamma$, $\beta$, and $\tilde \beta$.

\begin{figure}[t!]
    \centering
    \begin{minipage}{0.4\textwidth}
        \includegraphics[width=\linewidth]{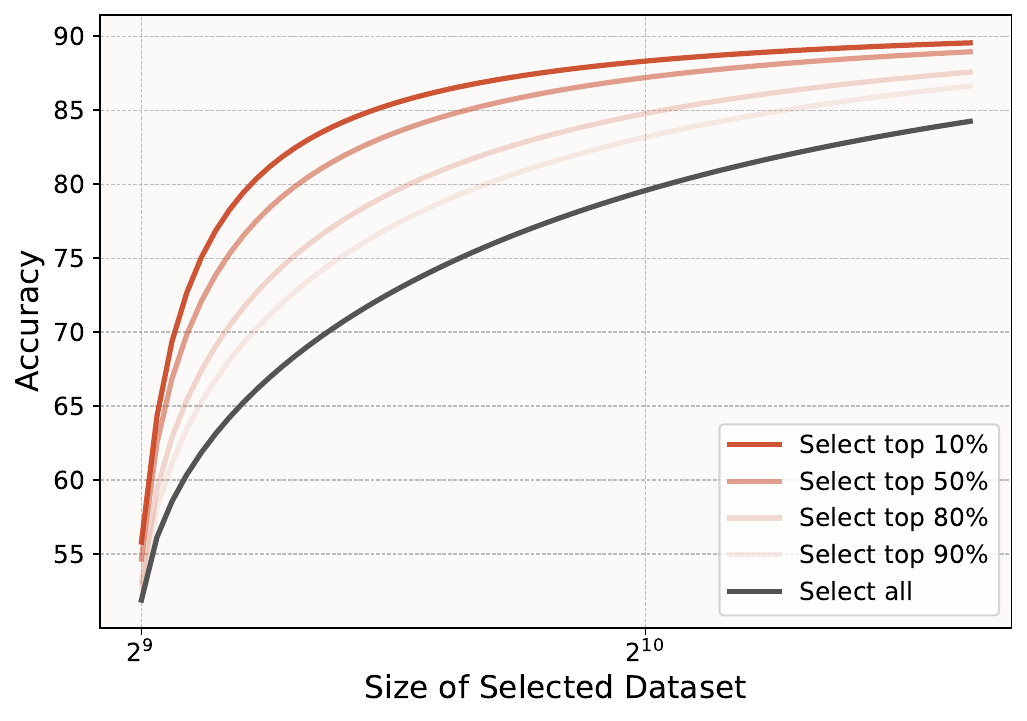}
    \end{minipage}
    \hfill
    \begin{minipage}{0.55\textwidth}
    \caption{Theoretical prediction for scaling behavior of accuracy (Theorem~\ref{thm:main}) for a simple classifier in a $d=512$ dimensional input space, as a function of dataset selection strategy. The classifier is trained on synthetic data with different pruning probabilities, where higher pruning probability corresponds to keeping only the most challenging examples (those closer to the decision boundary). The results compare selecting all samples (gray) versus selecting a fraction of the hardest samples (red). Selecting harder examples improves sample efficiency, achieving higher accuracy with fewer training samples.}
        \label{fig:theory_scaling}
    \end{minipage}
\end{figure}

\subsubsection{Example: Selecting Informative Examples.}

Consider a selection function of the form $q_i = q(x_i^\top w_s)$ for all $i$, where,
\begin{eqnarray}
q(t) := 1[|t| \le \xi] =  \begin{cases}
    1,&\mbox{ if }|t| \le \xi,\\
    0,&\mbox{ else,}
    \end{cases}
\end{eqnarray}
for some threshold $\xi \ge 0$. Such selection strategy selects only the examples near the decision boundary of $w_s$, analogous to using classifier entropy as a selection criterion but simpler to study. Lemma ~\ref{lm:KH} and \ref{lm:KE} derive explicit expressions for $(\gamma,\beta,\tilde\beta)$. 
Figure~\ref{fig:theory_scaling} presents theoretical predictions for test accuracy across different degrees of example selection, showing that \textit{selecting hard examples improves scaling laws}, reducing the number of training samples needed for the same performance. However, beyond a certain point, excessive pruning degrades performance, as illustrated in Figure~\ref{fig:explicit_prune_vs_DP}.

\subsubsection{Adaptive Selection Strategy.}  
Data selection relies on a pruning direction \( w_s \) to select informative/hard examples: $i \mapsto q(x_i^\top w_s) \in \{0,1\}$, but these examples are ultimately used to train \( \hat{w} \). If \( w_s \) and \( \hat{w} \) are misaligned, what is considered hard by \( w_s \) may not be hard for \( \hat{w} \), reducing the effectiveness of selective sampling. In fact, hard examples change over time: an example that was identified hard, might not remain hard are more training is done. To ensure alignment, \( w_s \) should periodically update to reflect the evolving decision boundary of \( \hat{w} \). This adaptive selection mechanism motivates the continuous data generation process of DP, as presented in Section \ref{sec:dp}.

Data selection relies on a pruning direction \( w_s \) to identify informative or hard examples: \( i \mapsto q(x_i^\top w_s) \in \{0,1\} \). However, these selected examples are ultimately used to train \( \hat{w} \), and if \( w_s \) and \( \hat{w} \) are misaligned, what is considered hard by \( w_s \) may not be hard for \( \hat{w} \), reducing the effectiveness of selective sampling. In fact, \( w_s \) and \( \hat{w} \) deviate from each other the more \( \hat{w} \) is trained on these examples. Moreover, the definition of ``hard'' changes over time—an example that was initially difficult may become easier as training progresses. To maintain alignment, \( w_s \) should be periodically updated to reflect the evolving decision boundary of \( \hat{w} \). This adaptive selection mechanism underpins the continuous data generation process in DP, as presented in Section~\ref{sec:dp}.

\section{Experiments}\label{sec:exp}
For all the experiments, we use the LDM1.5 ~\citep{rombach2022high} as the
pre-trained text-to-image (T2I) model. We studied four  different T2I models and found this model outperforming the rest. For more details see Appendix~\ref{app:choice_of_gens}.

\textbf{Datasets.}
We validate our framework on two datasets. ImageNet-100~\citep{tian2020contrastive, sariyildiz2023fake}, a subset of ImageNet-1k~\citep{deng2009imagenet}, containing 100 classes and 5,000 validation examples, where the real validation set is used for evaluation and the real training set (126,689 examples) serves as a held-out test set.
We also conduct experiment ImageNet-1k, using the 50,000 validation examples to monitor performance and reserving the real training set (1.3 million examples) as a held-out test set.

\begin{figure*}
\centering
    \includegraphics[width=.49\linewidth]{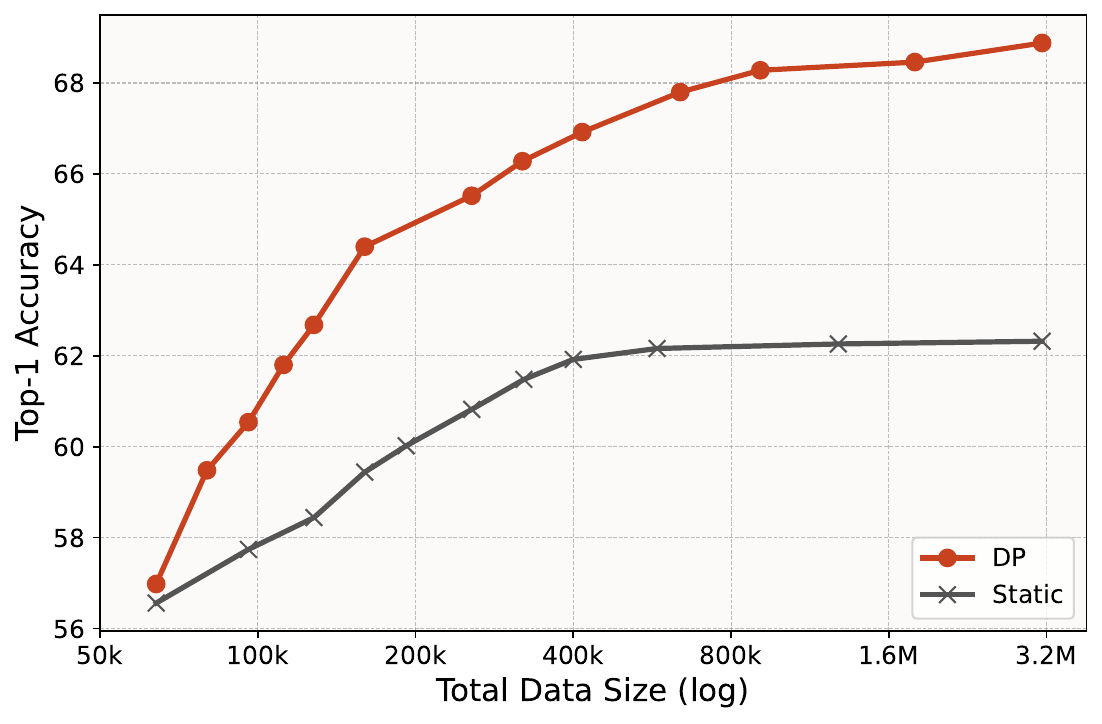}
    \hspace{0mm}
        \includegraphics[width=.49\linewidth]{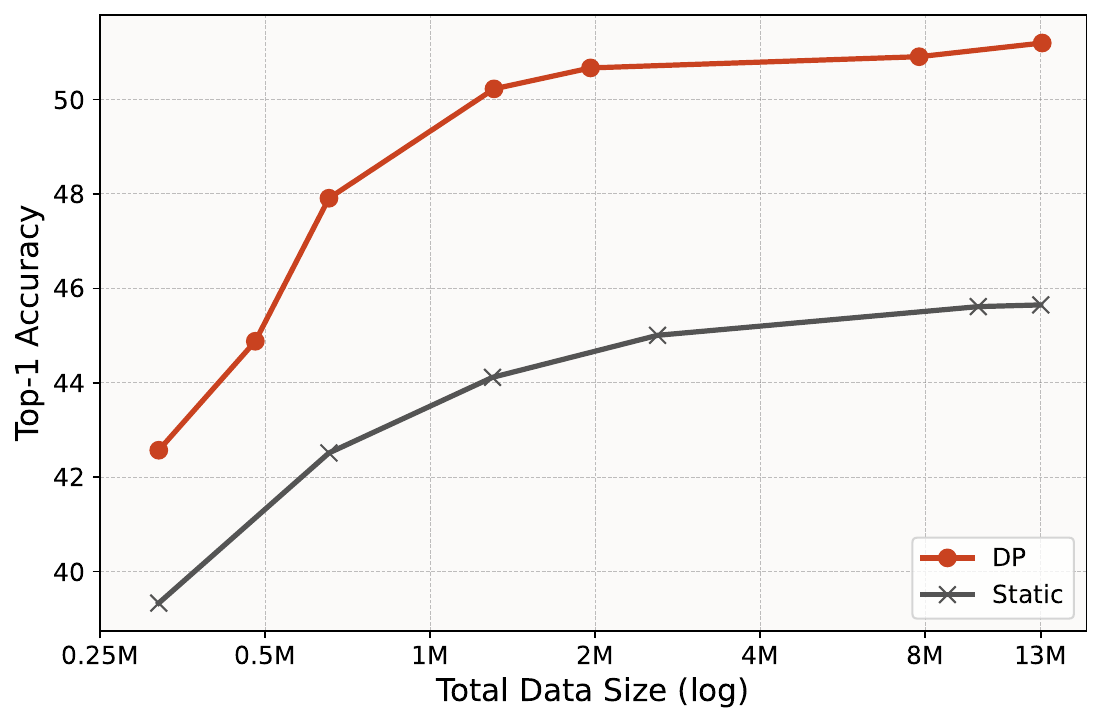}
    \caption{\textbf{Scaling laws of synthetic data.} 
        Real Validation accuracy versus total dataset size  for the Static (pink $\times$), and Deliberate Practice (blue o) setups on ImageNet-100 (left) and ImageNet-1k (right). DP significantly outperforms Static data generation, achieving higher  accuracy with fewer synthetic examples. DP achieves the same accuracy as the static setup using 7.5$\times$ less data on ImageNet-100 and 20$\times$ less data while outperforming it on ImageNet-1K.
        }
\label{fig:scaling-laws}
\end{figure*}

\subsection{Scaling Laws of Synthetic Data}
We train a Vision Transformer (ViT-B)~\citep{dosovitskiy2020image} classifier with synthetic data. We study two scenarios: 1) Static data generation and 2) Deliberate Practice (DP). In all the experiments in this section we have a fixed and controlled setup. We train the models for 100k and 50k iterations for ImageNet-1k and ImageNet-100 respectively. For additional details, see Appendix~\ref{app:exp_details}.

\textbf{Static data generation.} 
In this setup, all data is generated before training, and the classifier is trained on a fixed dataset. We experiment with different dataset sizes to see its impact on accuracy.

\textbf{Deliberate Practice data generation.}
Hyperparameters \(\omega\) and \(\lambda\) are tuned on ImageNet-100 and found effective for ImageNet-1k as well (see Section \ref{app:exp_details} for details). We track validation accuracy throughout training and use it to determine when to generate new data, following a patience-based criterion. To ensure the model has not over-fitted to the validation set, we also report accuracy on the full real training sets of ImageNet-100 and ImageNet-1k, used as \textit{held-out} test sets.

Figure~\ref{fig:scaling-laws} compares the scaling laws of the  \textbf{Static} and \textbf{Deliberate Practice (DP)} on ImageNet-100 and ImageNet-1k. On both datasets, we note that DP scales well with dataset size and it consistently outperforms the Static setup, achieving higher validation accuracy at any given dataset size. On ImageNet-100 we observe that DP can reach the best accuracy of the static setup (with 3 million examples) using only 400k examples. This means that DP requires 7.5$\times$ less data to reach the same performance. On ImageNet-1k, we observe that DP can outperform the best accuracy of the static setup (with 13 million examples), using only 640k examples. This translates to DP requiring 20$\times$ less data to outperform the Static setup. For additional details on the hyper-parameters of these experiments, see Appendix~\ref{app:details_scaling}. Refer to Figure~\ref{fig:vis_init_vs_final} for a visualization of how the dataset evolves from the start to the end of training.

\begin{table*}[h]
\centering
\caption{\textbf{Comparison with previous work.} DP outperforms other models on both ImageNet-100 and ImageNet-1k while requiring significantly less data and fewer training iterations. Note that DP experiments reported in this table are trained longer than models reported in the previous section and, consistent with other work, use a smaller classifier free guidance scale of $\lambda=2$.}
\vspace{0.1cm}
{\scriptsize
\begin{tabular}{llrrllllll}
\toprule
 & Task & \# Iters & Data size & IN real Val.& IN real tr.& IN-v2 & IN-Sk & IN-R & IN-A \\
\midrule
\midrule
Real & IN-100 & 100k & 130k & 88.5 & - & \textbf{76.4}&  37.1&  60.8& \textbf{33.5} \\
Syn. Static - \cite{sariyildiz2023fake}& IN-100 & 13k & 130k & 63.5 & - & 62.7&  41.8&  64.2& 13.7 \\
Syn. Static - \cite{sariyildiz2023fake} & IN-100 & 635k & 6.5M & 73.3 &- &72.3& 42.0& 59.4 & 17.1  \\
Syn. DP (ours) & IN-100  &100k & 1.9M & \textbf{74.3} & 75.0 & 66.3 &  \textbf{52.0}&  \textbf{76.6}& 25.9 \\
		
\midrule
Real & IN-1k & 200k & 1.3M & 82.6&-& \textbf{70.9}&  32.5&  44.6& \textbf{29.4} \\
Syn. Static - \cite{sariyildiz2023fake}& IN-1k  & 130k& 1.3M& 42.9& - & 43.0 &  16.6 &  26.3& 3.6 \\
Syn. Static - \cite{fan2024scaling}& IN-1k  & 210k& 2M & 50 & - & 42.2 & 27.2 & 45.7 & 6.6\\
Syn. Static - \cite{fan2024scaling}& IN-1k  & 315k& 64M& 54 & - & 46.0 & 32.4 &52.5 & 9.4\\
Syn. DP (ours)& IN-1k  & 200k & 6.5M & {54.1} & 54.84 &48.5 & 34.7 &  56.0&  12.3 \\
Syn. DP (ours) & IN-1k  & 200k & 9.1M & \textbf{55.1} & 55.73 &{49.3}&\textbf{36.0} &\textbf{57.2} & {13.4} \\
\bottomrule
\end{tabular}\label{tab:compare}
}
\end{table*}

\subsection{Comparison with Previous Work}
We compare DP with prior works on synthetic data generation for image classification~\citep{sariyildiz2023fake, fan2024scaling}. Specifically, we evaluate setups that use classnames for prompting and publicly available models for sample generation. Performance is assessed on real ImageNet (held-out) training and validation sets, as well as on ImageNet-V2~\citep{recht2019imagenet}, ImageNet-Sketch~\citep{wang2019learning}, ImageNet-R~\citep{hendrycks2021many}, and ImageNet-A~\citep{hendrycks2021natural} to measure out-of-distribution (OOD) generalization.

The results in Table~\ref{tab:compare} show that DP outperforms prior benchmarks on both ImageNet-100 and ImageNet-1k while requiring significantly less data and fewer training iterations. On ImageNet-100, DP generated 4.6 million fewer samples and trained for only one-sixth of the iterations compared to previous works, yet achieved superior performance on the real data. Similarly, on ImageNet-1k, DP reduced sample generation by 56.2 million and cut training iterations by over 30\%, while still outperforming previous results.

Furthermore, models trained with DP exhibit strong performance on out-of-distribution datasets, even surpassing models trained on real data on ImageNet-R and ImageNet-Sketch, with improvements of up to 15\%.

\subsection{Connection Between Pruning and DP}

In Section~\ref{sec:2}, we discussed how DP approximates direct sampling from a pruned distribution. Here, we validate this experimentally on ImageNet-100 using two setups:
\begin{enumerate}
    \item \textbf{Oversampling then Pruning:} Generate a large pool and select high-entropy samples.
    \item \textbf{Direct entropy-guided generation:} Generate only informative samples (a special case of DP with a single step of data addition).
\end{enumerate}

We start with 130k generated samples (regular vanilla sampling), train for 17k iterations, then add a one-time additional 130k samples, increasing the total data size to 260k and training for an additional 33k iterations.

In setup 1, we vary the pool size, ranging from no pruning (130k pool) up to an oversampling ratio of 18 (2.4M pool), selecting the top 130k high-entropy samples. In setup 2, we generate exactly 130k entropy-guided samples, varying the entropy-gauidance coefficient.

Figure~\ref{fig:explicit_prune_vs_DP} (a, b) shows that both methods improve performance up to a point, after which excessive selection of high-entropy samples leads to degradation—likely due to selecting high-entropy but harmful outliers. This aligns with our theoretical predictions in Figure~\ref{fig:explicit_prune_vs_DP} (c).

Regarding computational costs, generating a single image with entropy-guidance on an Nvidia H100 takes 1.82$\times$ longer than standard vanilla sampling. However, achieving similar performance through oversampling requires significantly more data, leading to a linear increase in cost. As a result, DP is $5\times$ more efficient while also providing higher absolute improvements compared to pruning-based selection. See Figure~\ref{fig:explicit_prune_vs_DP} for details and Figure~\ref{fig:wolfs} for some visualizations.

\begin{figure}[th]
    \centering
    \includegraphics[width=1\linewidth]{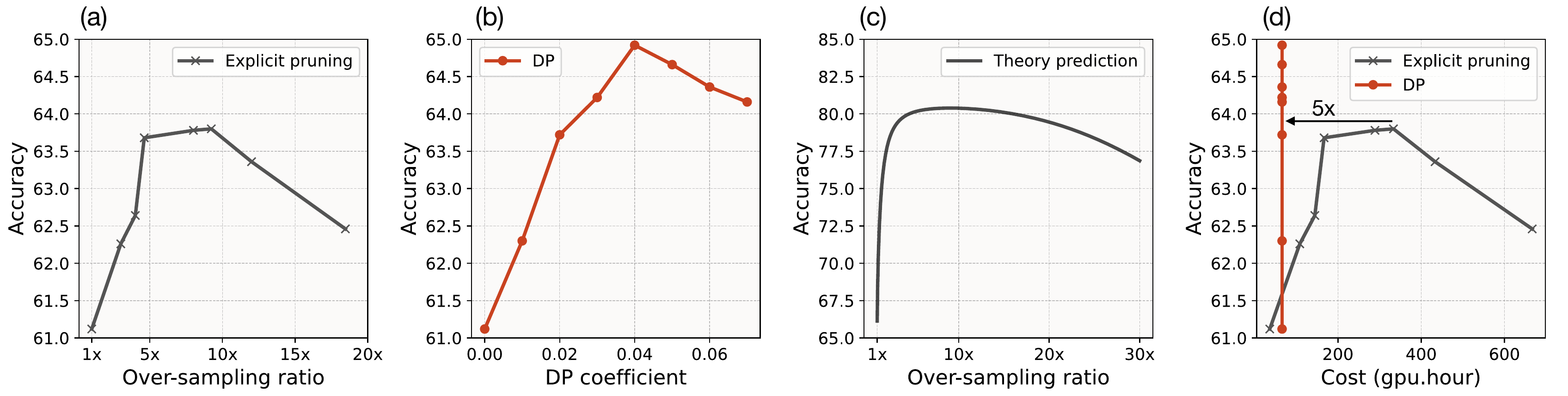}
    \caption{Plots describing the performance of DP compared to explicit pruning and theory prediction while changing the oversampling ratio or the DP coefficient. (a) Over-sampling with entropy-based selection – Generate a large pool of samples (ranging from 130k to 2.4M) and select the 130k highest-entropy examples. (b) Generate 130k high-entropy examples directly using DP with varying entropy guidance strength through $\omega$. (c) The theory prediction on the accuracy based on the over-sampling ration. (d) Comparing the compute cost of DP vs oversampling then pruning.
    We observe that DP exhibits a similar accuracy curve compared to explicit pruning and theoretical prediction when changing the over-sampling/DP coefficient. However, DP is computationally remarkably more efficient while gaining more accuracy delta.}
    \label{fig:explicit_prune_vs_DP}
    \vspace{-0.2cm}
\end{figure}

\subsection{The evolution of hard examples over time}

``Does the sample hardness change as training progresses?''

To answer this question, Figure~\ref{fig:everystep} (left) tracks the error on examples that were misclassified at the time they were added. As expected, once introduced, the model gradually learns to classify them correctly. However, an interesting trend emerges: even before these examples were added, their error was lower than at the moment of inclusion. This suggests that the notion of hardness is dynamic—what is considered challenging at one point may become easier over time. Conversely, examples that were once easy might later become difficult due to shifts in the learned decision boundaries. This highlights a key limitation of static pruning approaches and underscores the importance of dynamically adapting the selection of informative examples throughout training, as done in Deliberate Practice (DP). See Figure~\ref{fig:vis_stages} for some visualization of generations through training.

Figure~\ref{fig:everystep} (right) shows the evolution of generated examples from the class of ``school bus'' throughout training. Early samples often have atypical colors or grayscale tones, indicating the model's initial struggle with changes in the \textit{color} features. As training progresses, more challenging examples with unusual shapes and viewpoints emerge, reflecting the model’s shifting focus towards more complicated features such as \textit{shape}. See additional samples in Figure~\ref{fig:merged}.

\begin{figure}[ht]
    \centering
\includegraphics[width=1\linewidth]{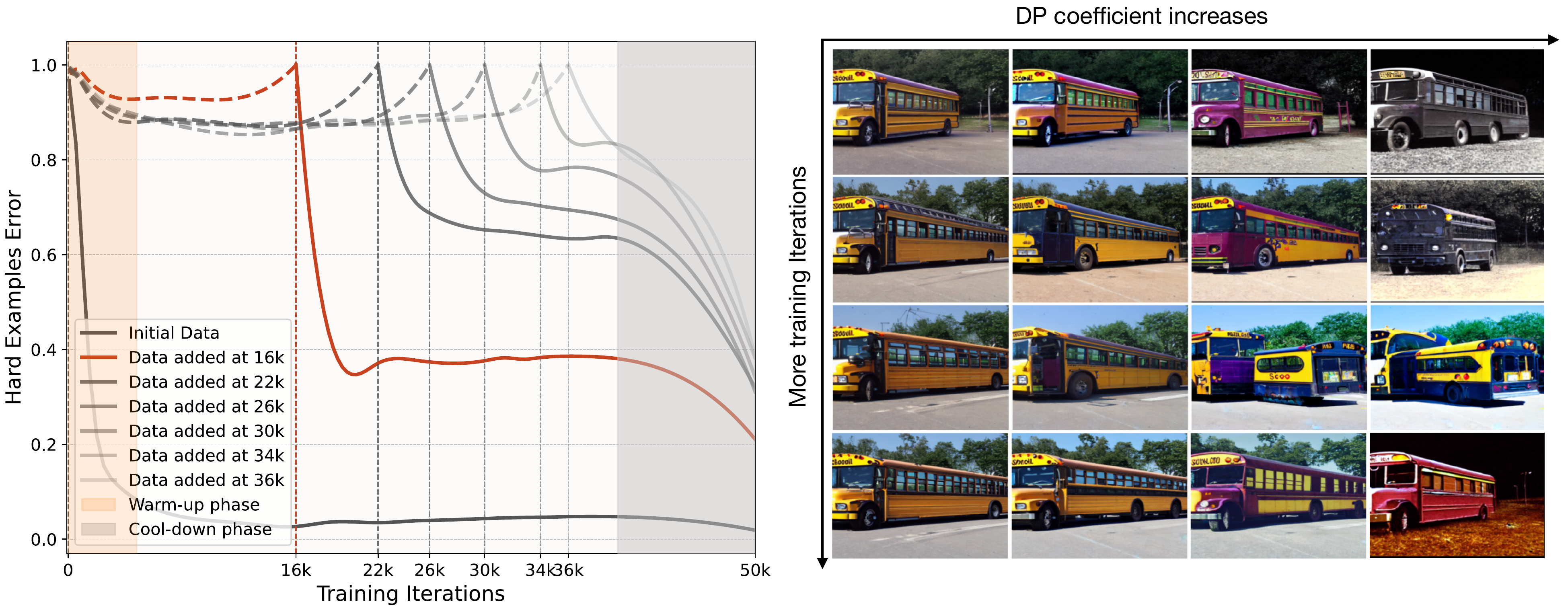}
    \caption{\textbf{Left:} Error trajectories of hard (misclassified) examples added at different training stages. The red curve highlights the first batch of added data for better visibility, but the same trend applies to all batches. Notably, even before being trained on, these examples exhibit a lower error rate than at their point of inclusion, indicating that hardness is not static, it evolves throughout training. \textbf{Right:} Examples of the prompt ``school bus'' generated at different epochs with varying entropy guidance scales (\(\omega\)). We observe that the model's training needs evolve over time, starting with simpler challenges like color recognition before progressing to harder examples with unusual shapes or viewpoints. }
    \label{fig:everystep}
\end{figure}

\section{Related Work}
\textbf{Synthetic data for training neural networks.} Synthetic data has become a powerful tool for training machine learning models across various domains. For instance, text-to-image diffusion models have been successfully used for visual representation learning~\citep{astolfi2023instance, li2025genview, tian2024learning, tian2024stablerep, sariyildiz2023fake}. However,  limitations of synthetic data are highlighted by~\citet{fan2024scaling}, emphasizing the importance of generating more challenging and informative examples. Addressing distribution shifts between synthetic and real data, \citet{hemmat2023feedback} and \citet{yuan2023real} propose synthesizing training data that matches real data distributions or conditioning on real examples to reduce this gap. Expanding small-scale datasets has also been studied, see e.g.\ ~\citet{zhang2024expanding}.
Another related line of work involves using VLMs and LLMs to generate descriptions for augmenting datasets~\citep{dunlap2023diversify}.

Synthetic data is increasingly used to train (LLMs). For example, LLaMA3~\citep{grattafiori2024llama3herdmodels} employs AI-generated data for fine-tuning. Similarly, self-play approaches, e.g.,\ \citet{yuan2024self}, align with our framework by generating increasingly difficult examples for training.

\textbf{Continual learning and active learning.}
 Our work is also closely related to principles from active learning~\citep{bang2024active,evans2023bad} and continual learning, which prioritize iterative model updates with tailored data. These methods highlight the importance of selecting informative samples based on the model's current state.
 \cite{sorscher2022beyond} showed that pruning static datasets using metrics like margin scores can improve scaling laws by retaining the most informative examples, albeit in a non-adaptive manner.
 
\textbf{Challenges and risks of synthetic data.}
The challenges of training models on synthetic data, have gained significant attention. \citet{dohmatob2024strong,dohmatob2024tale} studied “model collapse”, a phenomenon where iterative training on synthetic data degrades performance. 
They emphasize that data verification mechanisms can mitigate this risk and enable scaling with synthetic data. Similarly, our framework by generating informative examples through a dynamic loop, improves sample efficiency.

\section{Conclusion}
We introduced Deliberate Practice for Synthetic Data Generation, a framework that improves scaling laws by dynamically generating challenging and informative training examples. Unlike traditional methods that rely on static datasets, our approach approximates generating data directly from a pruned distribution, reducing inefficiencies and ensuring models continuously training on informative samples. We provided theoretical insights into the benefits of training on pruned distributions and empirically demonstrated that our method significantly improves performance while requiring fewer training iterations. Our results on ImageNet-100 and ImageNet-1K show that Deliberate Practice achieves superior accuracy with far less data and compute, outperforming previous state-of-the-art. Our work highlights the potential of structured synthetic data generation in advancing efficient and adaptive learning.

\clearpage
\newpage

\bibliographystyle{assets/plainnat}
\bibliography{paper}
\clearpage
\newpage
\beginappendix

\section{Further Theoretical Analysis and proofs}\label{app:theory}
\subsection{The Unregularized Regime}

We now consider our theory in the limit  $\lambda \to 0^+$. Thus, the parameter vector for the classifier is the least-squares estimate for $w_0$, i.e $\hat w=\hat w_{LS} = {X'}^\dagger Y'$. We have the following important corollary to Theorem \ref{thm:main}.

\begin{corollary}
It holds that 
\begin{eqnarray}
E_{test}(\hat w) \to \Phi(-\frac{a}{\sqrt{b-a^2}})\text{ in the limit }n,d \to \infty,\,d/n \to \phi,\,\lambda\to 0^+,
\end{eqnarray}
where the constants $a$ and $b$ are given as follows:

(A) If $\phi < p$, then
\begin{align}
 a &:= \beta \sqrt{\frac{2}{\pi}} \frac{r_0}{p-\phi},\quad b := \frac{p^2\phi + \beta^2\cdot\left(r'_0-2\phi r_0\right)}{(p-\phi)^3},\\
 \text{with }
r_0 &:= 1-\rho^2+\rho^2\cdot p/\gamma,\quad
r'_0 := p\cdot \left(1-\rho^2 + \rho^2\cdot ((p-\phi)p/\gamma^2+\phi/\gamma)\right).
\end{align}

(B) If $\phi > p$, then
\begin{align}
a &:= \beta\sqrt{\frac{2}{\pi}}c_0r_0,\quad b := c_0\cdot\left(p\phi - \beta^2 r_0\right),\\
\text{with }c_0 &:=1-p/\phi,\quad r_0 := 1-\rho^2 + \frac{\rho^2}{\gamma/\phi + c_0}.
\end{align}
\label{cor:ridgeless}
\end{corollary}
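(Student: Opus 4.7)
The plan is to specialize Theorem~\ref{thm:main} to the ridgeless limit $\lambda \to 0^+$ by evaluating the leading-order behavior of $m(-\lambda)$, $m'(-\lambda)$, $\tilde m(-\lambda)$, $\tilde m'(-\lambda)$, $r(-\lambda)$, and $r'(-\lambda)$, and then repackaging $m_0$ and $\nu_0$ into the stated $a$ and $b$. The closed form Eq.~\eqref{eq:meq-mp} is awkward to differentiate, so the more useful input is its implicit form
\begin{equation*}
m(z)(p-\phi) - z\, m(z)\bigl(1+\phi m(z)\bigr) = 1,
\end{equation*}
obtained by squaring Eq.~\eqref{eq:meq-mp} and rearranging. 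This lets me extract Taylor expansions of $m$ around $z=0$ in the non-singular case (A) and Laurent expansions in the singular case (B) without manipulating the square root directly.

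In case (A), $\phi<p$, the spectrum of the resolvent is bounded away from zero and $m$ is analytic at $0$. Setting $z=0$ in the implicit equation gives $m(0)=1/(p-\phi)$; implicit differentiation once gives $m'(0)=p/(p-\phi)^3$. From these I immediately get $s(0)=\gamma(p-\phi)/p$, $\tilde m(0)=p/[\gamma(p-\phi)]$, and via the chain rule a clean expression for $\tilde m'(0)$. Substituting into $r(0)=\omega^2 m(0)+\tilde\omega^2\tilde m(0)$ and $r'(0)=\omega^2 m'(0)+\tilde\omega^2\tilde m'(0)$ with $\omega^2=(1-\rho^2)\beta^2$, $\tilde\omega^2=\rho^2\tilde\beta^2$, the expressions $m_0$ and $\nu_0$ in Theorem~\ref{thm:main} rearrange into the stated $a$ and $b$; the scalar $r_0=(1-\rho^2)+\rho^2 p/\gamma$ collects the $\rho$-weighted sum of the two resolvent traces, $r_0'$ its derivative counterpart, and the $\beta$-free term $p^2\phi/(p-\phi)^3$ in $b$ comes from $p\phi m'(-\lambda)$ inside $\nu_0$.

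Case (B), $\phi>p$, is the main technical obstacle. Here the effective sample size $pn$ is smaller than $d$, so $\frac{1}{n}X^\top D X$ is singular and each of $m(-\lambda)$, $m'(-\lambda)$, $\tilde m(-\lambda)$, $r(-\lambda)$, and the individual terms in $\nu_0$ diverges as $\lambda\to 0^+$. The implicit equation forces the Laurent expansion $m(-\lambda)=\frac{\phi-p}{\phi\lambda}+\frac{c_0}{\phi}+O(\lambda)$ with $c_0:=1-p/\phi$; differentiating the implicit relation once yields $m'(-\lambda)=\frac{\phi-p}{\phi\lambda^2}+O(1/\lambda)$. Because $1+\phi m(-\lambda)=(\phi-p)/\lambda+O(1)$ is itself of order $1/\lambda$, one has $s(-\lambda)=\gamma/(1+\phi m(-\lambda))=O(\lambda)$, hence $\tilde m(-\lambda)=1/(s(-\lambda)+\lambda)$ is of order $1/\lambda$. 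Crucially, the factor $(1+\phi m(-\lambda))^{-1}$ inside the third term of $\nu_0$ is of the precise order $\lambda$ needed to cancel the $1/\lambda^2$ divergence of $m'(-\lambda)$; tracking the $O(1)$ remainders then recombines into $c_0(p\phi-\beta^2 r_0)$ with $r_0=1-\rho^2+\rho^2/(\gamma/\phi+c_0)$. The genuinely delicate step is organizing these cancellations at the correct Laurent order so that no subleading contribution is dropped; a natural sanity check is that the uniform-sampling specialization $p\to 1$, $q\equiv 1$ should reduce to the classical ridgeless ridge-regression risk for the Marchenko--Pastur resolvent.
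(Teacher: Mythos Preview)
Your plan for case (A) is exactly what the paper does: evaluate $m$, $m'$, $\tilde m$, $\tilde m'$ at $z=0$ (the paper uses the explicit formula \eqref{eq:meq-mp} together with Lemma~\ref{lm:primes}, you use implicit differentiation of the quadratic relation; both give $m(0)=1/(p-\phi)$ and $m'(0)=p/(p-\phi)^3$), then substitute into $m_0$ and $\nu_0$. Nothing to add there.

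Case (B) is where your write-up wobbles. First, a concrete slip: the constant term in the Laurent expansion of $m(-\lambda)$ is not $c_0/\phi$. From your own implicit relation $m(p-\phi)+\lambda m(1+\phi m)=1$, writing $m=A/\lambda+B+O(\lambda)$ with $A=(\phi-p)/\phi$ and matching the $O(1)$ terms gives $B(\phi-p)+A=1$, i.e.\ $B=p/[\phi(\phi-p)]$, not $(\phi-p)/\phi^2$. Second, and more importantly, your description of ``tracking the $O(1)$ remainders'' mischaracterizes what happens: $\nu_0$ does \emph{not} become $O(1)$ after internal cancellation. All three terms in $\nu_0$ are genuinely of order $1/\lambda^2$, and $\nu_0\sim b/\lambda^2$ diverges; likewise $m_0\sim a/\lambda$. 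The point is that the \emph{ratio} $m_0/\sqrt{\nu_0-m_0^2}$ is homogeneous of degree zero in $\lambda$, so only the leading Laurent coefficients matter and the subleading term $B$ you miscomputed is never needed.

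The paper exploits exactly this homogeneity: it rewrites $m_0/\sqrt{\nu_0-m_0^2}=(-zm_0)/\sqrt{z^2\nu_0-(zm_0)^2}$ and then computes the \emph{finite} limits $-zm(z)\to c_0$, $z^2m'(z)\to c_0$, $-z\tilde m(z)\to c_0/(\gamma/\phi+c_0)$, $z^2\tilde m'(z)\to c_0/(\gamma/\phi+c_0)$, and $\dfrac{-zm'(z)}{1+\phi m(z)}\to 1/\phi$ directly from \eqref{eq:meq-mp}. This avoids Laurent bookkeeping entirely and makes it transparent that no subleading information is required. Your approach would work if you recognized the homogeneity up front; as written, the ``$O(1)$ remainders'' framing and the incorrect $B$ suggest you are planning to track more orders than necessary, which is where the execution would likely go wrong.
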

The result is empirically verified in Figure \ref{fig:figcool}(a).

\begin{figure}[!h]
    \centering
    \begin{subfigure}{0.6\linewidth}
        \centering
        \includegraphics[width=\linewidth]{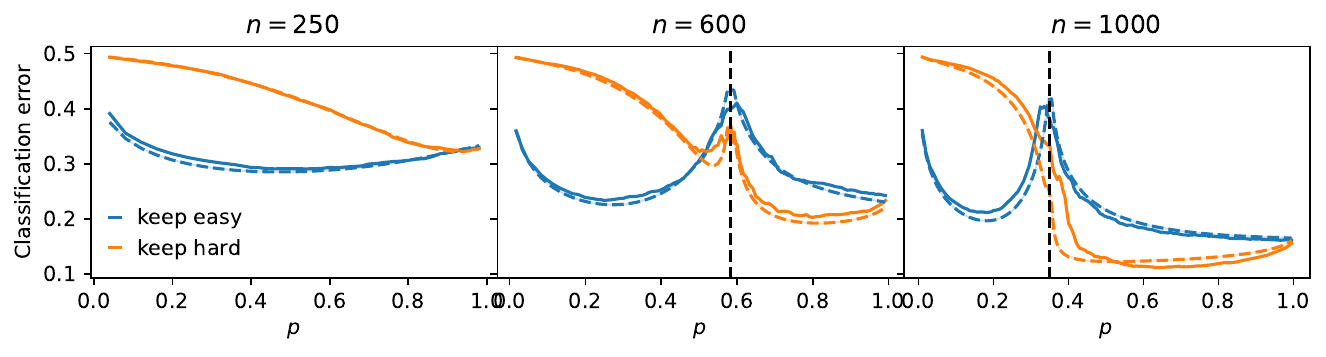}
        \caption{Regularization parameter $\lambda=10^{-6}$.}
        \label{fig:subfig1}
    \end{subfigure}
    \hfill
    \begin{subfigure}{0.6\linewidth}
        \centering
        \includegraphics[width=\linewidth]{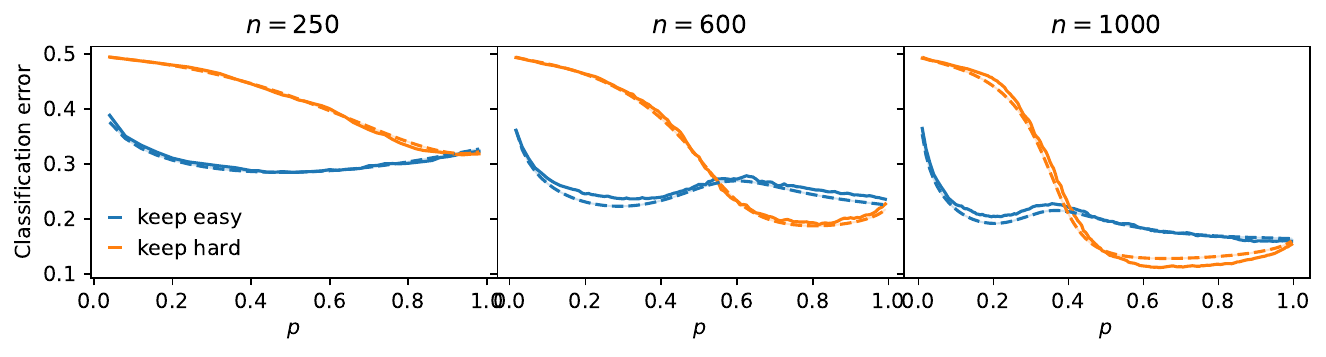}
        \caption{Regularization parameter $\lambda=10^{-2}$.}
        \label{fig:subfig2}
    \end{subfigure}

    \caption{\textbf{Empirical verification of Theorem \ref{thm:main} and Corollary \ref{cor:ridgeless}.} For this experiment, the input dimension is $d=350$, and each subplot corresponds to a different value of the original sample size $n$. The experiment for $\lambda=10^{-6}$ is a proxy for the unregularized case $\lambda\to 0^+$. Solid lines correspond to observed values of the test error $E_{\text{test}}(\hat{w})$, while broken lines are the theoretical prediction of Theorem \ref{thm:main} (\textbf{bottom row}) and Corollary \ref{cor:ridgeless} (\textbf{top row}). Notice the excellent match between the experimental results and our theory. Also, observe the multiple-descent patterns, reminiscent of a non-trivial effect of different pruning strategies in different regimes of the pruned training dataset size $n_0=np$; the vertical line corresponds to an interpolation threshold at $p=\phi$, i.e., $n_0=d$.}
    \label{fig:figcool}
\end{figure}

\subsection{Some Important Examples of Pruning Strategies}
\paragraph{Keep Hard Examples (KH).} Consider the case where the pruning strategy is given by $q_i = q_{KH}(x_i^\top w_s)$ for all $i$, where
\begin{eqnarray}
q_{KH}(t) := 1[|t| \le \xi] =  \begin{cases}
    1,&\mbox{ if }|t| \le \xi,\\
    0,&\mbox{ else,}
    \end{cases}
\end{eqnarray}
for some $\xi \ge 0$. Define $\alpha:=\xi/\|w_s\|$. We have explicit formula for the constants $\beta$ and $\tilde \beta$ appearing in Theorem \ref{thm:main}. Viz,
\begin{lemma}
With $\tau:=\rho/\sqrt{1-\rho^2}$, $\epsilon_1 := 2\Phi(\alpha/\sqrt{1-\rho^2})-1$, and $\epsilon_2 := 2\Phi(\tau \alpha)-1$, it holds that
\begin{align}
    \tilde \beta(q_{KH}) &= 2(\rho\varphi(0)\epsilon_1-\varphi(\alpha)\epsilon_2),\quad
    \beta(q_{KH}) = 2\varphi(0)\sqrt{1-\rho^2}\cdot \epsilon_1.
\end{align}
\label{lm:KH}
\end{lemma}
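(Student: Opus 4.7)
The plan is a direct computation of the two Gaussian integrals defining $\beta$ and $\tilde\beta$, using that for the pruning rule $q_{KH}(t)=1[|t|\le \xi]$ applied to $x_i^\top w_s$ (Gaussian with variance $\|w_s\|^2$), the induced condition on the standardized variable $G=x_i^\top w_s/\|w_s\|\sim\mathcal N(0,1)$ is $|G|\le \alpha$. Both $\beta$ and $\tilde\beta$ therefore reduce to truncated integrals on $[-\alpha,\alpha]$, and the task is to express them in closed form using $\varphi$ and $\Phi$.

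For $\beta = 2\int_{-\alpha}^{\alpha} \varphi(g)\varphi(\tau g)\,dg$, I would merge the two Gaussian factors into a single one by completing the square. Using the identity $1+\tau^2 = 1/(1-\rho^2)$, the product $\varphi(g)\varphi(\tau g)$ equals $\varphi(0)\cdot \varphi\!\left(g/\sqrt{1-\rho^2}\right)/\sqrt{1-\rho^2}\cdot \sqrt{1-\rho^2}$ up to rearrangement; more simply, after the change of variables $u=g/\sqrt{1-\rho^2}$, the integrand becomes a standard normal density and the limits become $\pm\alpha/\sqrt{1-\rho^2}$. Reading off the tail probability gives exactly $\beta = 2\varphi(0)\sqrt{1-\rho^2}\,\epsilon_1$.

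For $\tilde\beta = 2\int_{-\alpha}^{\alpha} g\varphi(g)\Phi(\tau g)\,dg$, the natural move is integration by parts, taking $v=-\varphi(g)$ as an antiderivative of $g\varphi(g)$ and differentiating $u=\Phi(\tau g)$ to get $du=\tau\varphi(\tau g)\,dg$. The boundary term collapses because $\varphi$ is even and $\Phi(-\tau\alpha)=1-\Phi(\tau\alpha)$, leaving $-\varphi(\alpha)(2\Phi(\tau\alpha)-1) = -\varphi(\alpha)\epsilon_2$. The remaining interior integral is precisely $\tau\cdot(\beta/2)$, and the identity $\tau\sqrt{1-\rho^2}=\rho$ converts this to $\rho\varphi(0)\epsilon_1$. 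Assembling the two pieces gives $\tilde\beta = 2(\rho\varphi(0)\epsilon_1 - \varphi(\alpha)\epsilon_2)$.

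There is no real conceptual obstacle here; the lemma is bookkeeping with Gaussian identities. The items to watch are the two algebraic simplifications $1+\tau^2 = 1/(1-\rho^2)$ and $\tau\sqrt{1-\rho^2}=\rho$, which are exactly what turn the intermediate expressions (a Gaussian variance of $1-\rho^2$ in the $\beta$ calculation and a factor $\tau\beta/2$ in the $\tilde\beta$ calculation) into the symmetric form stated in terms of $\epsilon_1$, $\epsilon_2$, and $\varphi(0)$.
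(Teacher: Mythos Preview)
Your proposal is correct and follows essentially the same route as the paper: the paper quotes the standard antiderivatives $\int \varphi(\tau z)\varphi(z)\,dz=\tfrac{1}{t}\varphi(0)\Phi(tz)$ and $\int z\Phi(\tau z)\varphi(z)\,dz=(\tau/t)\varphi(0)\Phi(tz)-\varphi(z)\Phi(\tau z)$ with $t=\sqrt{1+\tau^2}=1/\sqrt{1-\rho^2}$, whereas you derive those same identities by completing the square and integration by parts respectively. The algebra and the final bookkeeping with $\epsilon_1,\epsilon_2$ are identical.
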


\paragraph{Example 2: Keep Easy Examples (KE).} Here, the pruning strategy is $q_i = q_{KE}(x_i^\top w_s)$, where
\begin{eqnarray}
 q_{KE}(t) := 1[|t| > \xi] = \begin{cases}
    0,&\mbox{ if }|t| \le \xi,\\
    1,&\mbox{ else.}
\end{cases}
\end{eqnarray}
\begin{lemma}
\label{lm:KE}
With $\tau:=\rho/\sqrt{1-\rho^2}$, $\epsilon_1 := 2(1-\Phi(\alpha/\sqrt{1-\rho^2}))$, $\epsilon_2 := 2\Phi(\tau \alpha)-1$, it holds that
\begin{align}
    \tilde \beta(q_{KE})&= 2(\rho\varphi(0)\epsilon_1+\varphi(\alpha)\epsilon_2),\quad \beta(q_{KE}) = 2\varphi(0)\sqrt{1-\rho^2}\cdot \epsilon_1.
\end{align}
\end{lemma}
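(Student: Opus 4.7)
My plan is to exploit the complementarity $q_{KE}(t) = 1 - q_{KH}(t)$. Since both $\beta$ and $\tilde\beta$ are linear functionals of the selection $q$ (as $\mathbb{E}[q(G)\varphi(\tau G)]$ and $\mathbb{E}[q(G)\Phi(\tau G)G]$ are linear in $q$), we obtain $\beta(q_{KE}) = \beta(\mathbf{1}) - \beta(q_{KH})$ and $\tilde\beta(q_{KE}) = \tilde\beta(\mathbf{1}) - \tilde\beta(q_{KH})$, where $\mathbf{1}$ denotes the all-accept selector. Lemma \ref{lm:KH} already supplies the second term in each identity, so the task reduces to evaluating two Gaussian integrals under $q\equiv 1$ and reconciling $1-\epsilon_1^{KH}$ with the complementary quantity $\epsilon_1^{KE}$.

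For the first integral, I use the Gaussian product identity $\varphi(g)\varphi(\tau g) = (2\pi)^{-1}\exp(-g^2/(2(1-\rho^2)))$, which holds since $1+\tau^2 = 1/(1-\rho^2)$. Direct integration yields
\begin{equation*}
\beta(\mathbf{1}) = 2\int_{\mathbb{R}}\varphi(g)\varphi(\tau g)\,dg = 2\varphi(0)\sqrt{1-\rho^2}.
\end{equation*}
For the second, integration by parts using $g\varphi(g) = -\varphi'(g)$ gives
\begin{equation*}
\tilde\beta(\mathbf{1}) = 2\int_{\mathbb{R}} g\,\Phi(\tau g)\varphi(g)\,dg = 2\tau\int_{\mathbb{R}}\varphi(g)\varphi(\tau g)\,dg = 2\rho\varphi(0),
\end{equation*}
where the boundary term vanishes because $\varphi(g)\Phi(\tau g) \to 0$ as $g \to \pm\infty$, and I used $\tau\sqrt{1-\rho^2} = \rho$.

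Combining with the Lemma \ref{lm:KH} formulas and the elementary identity $1-(2\Phi(\alpha/\sqrt{1-\rho^2})-1) = 2(1-\Phi(\alpha/\sqrt{1-\rho^2}))$, i.e.\ $1-\epsilon_1^{KH} = \epsilon_1^{KE}$, yields the claimed expressions. Note in particular that the $\varphi(\alpha)\epsilon_2$ term keeps its positive sign in $\tilde\beta(q_{KE})$ because the minus sign from the subtraction cancels the minus sign that appears in the $\tilde\beta(q_{KH})$ formula. The main obstacle is purely bookkeeping, tracking how the definition of $\epsilon_1$ flips between the two lemmas rather than anything analytic; a direct computation by splitting the integration domain into $(-\infty,-\alpha)$ and $(\alpha,\infty)$ and integrating by parts on each half-line would also succeed but needlessly duplicates effort already spent on Lemma \ref{lm:KH}.
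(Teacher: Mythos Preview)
Your proposal is correct and follows essentially the same route as the paper: exploit $q_{KE}=1-q_{KH}$, use linearity of $\beta$ and $\tilde\beta$ in $q$, evaluate the two Gaussian integrals $\beta(\mathbf 1)=2\varphi(0)\sqrt{1-\rho^2}$ and $\tilde\beta(\mathbf 1)=2\rho\varphi(0)$, and then subtract the Lemma~\ref{lm:KH} expressions while tracking the flip $1-\epsilon_1^{KH}=\epsilon_1^{KE}$. Your bookkeeping of the sign on the $\varphi(\alpha)\epsilon_2$ term is exactly the point to watch, and you handle it correctly.
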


\paragraph{Example 3: Interpolation between Keep Hard and Keep Easy Strategies.} Consider the following pruning strategy proposed in \cite{kolossov2024towards}
\begin{eqnarray}
    q(t) \propto \sigma(t)^\omega (1-\sigma(t))^\omega,
\end{eqnarray}
for some tuning parameter $\omega$. Here, $\sigma$ is the sigmoid function. We can associate $q(x_i^\top w_s)$ with the probability the auxiliary classifier $x \mapsto sign(x^\top w_s)$ assigns to an example $x_i$. Thus, positive values of $\omega$ correspond to keeping examples considered uncertain (i.e hard) by this classifier, while negative values correspond to examples considered easy.

\subsection{Main Ingredients of Proofs}
\subsubsection{Deterministic Equivalent for the Resolvent Matrix $R$}
\begin{definition}[Deterministic Equivalents]
    Given a sequence of random $N \times N$ matrices $(R_N)_N$, a deterministic equivalent thereof is a sequence of deterministic  $N \times N$  matrices $(\overline R_N)_N$ such that
    \begin{eqnarray}
    \trace A_N (R_N -\overline R_N) \overset{a.s}{\to} 0,
    \end{eqnarray}
    for all sequences of $N \times N$ matrices $(A_N)_N$ with bounded Frobenious norm.
\end{definition}

  Let $\Pi$ (resp. $\Pi_\perp=I_d-\Pi$) be the projection onto the span (resp. orthogonal complement of the span) of $w_s$. Define the following auxiliary vectors and scalars
\begin{align}
v &= \Sigma^{1/2}w_s,\quad v_1 = \frac{v^\top w_s}{\|w_s\|},\quad v_\perp = \Pi_\perp v.
\label{eq:projproj}
\end{align}
Note that $v_\perp$ is $(d-1)$-dimensional and $\|v_\perp\| = \sqrt{\|v\|^2-v_1^2}$.

Henceforth we make the replacement $z=-\lambda<0$, so that the resolvent matrix $R$ now writes
\begin{eqnarray}
    R = R(z) := (X^\top D X/n-zI_d)^{-1}.
\end{eqnarray}
 Let $\delta(z)$ be the unique positive solution to the fixed-point equation
\begin{align}
m(z) &= d^{-1}\trace\bar R_b(z),\quad \delta(z) = n^{-1}\trace\Sigma \bar R_b(z),\\
\bar R_b(z) &= \left(\mathbb E_{x \sim \mathcal N(0,\Sigma)}\,\left[\frac{q(x^\top w_s)}{1+q(x^\top w_s)\delta(z)}\right]\Sigma-zI_d\right)^{-1}.
\end{align}

Note that the inner expectation evaluates to
$$
\mathbb E_{x \sim \mathcal N(0,\Sigma)}\,\left[\frac{q(x^\top w_s)}{1+q(x^\top w_s)\delta(z)}\right] = \frac{p}{1+\delta(z)}=:t(z),
$$
and so $\bar R_b(z) = (t(z)\Sigma-z I_d)^{-1}$. Observe that $\bar R_b(z)(t(z)\Sigma-zI_d) = I_d$, and so $t(z)\Sigma\bar R_b(z) = I_d+z\bar R_b(z)$. We deduce that
\begin{align*}
t(z)\delta(z) &= n^{-1}\trace t(z)\Sigma \bar R_b(z) = n^{-1}\trace (I_d+z \bar R_b(z))=\phi\cdot\left(1+zm(z)\right).
\end{align*}
Thus the equations defining $m(z)$ and $\delta(z)$ can be rewritten as 
\begin{align}
m(z) &=  d^{-1}\trace (t(z)\Sigma-z I_d)^{-1},\\
t(z) &= \frac{p}{1+\delta(z)},\\
\phi\cdot(1+zm(z)) &= t(z)\delta(z) = t(z)\left(\frac{p}{t(z)}-1\right)=p-t(z).
\end{align}
Solving for $\phi z m(z)$ in terms of $t(z)$ in the last equation gives
$$
\phi zm(z) = \frac{p\delta(z)}{1+\delta(z)}-\phi = p-\phi - \frac{p}{1+\delta(z)}=p-\phi-t(z).
$$
Plugging this into the first equation gives the following fixed-point equation for $t(z)$
\begin{equation}
p-\phi-t(z) = zn^{-1}\trace(t(z)\Sigma-zI_d)^{-1}.
\label{eq:tz}
\end{equation}
The following result shows that $\bar R$ is a deterministic equivalent for $R$.
\begin{proposition}
Recall the function $t(z)$ as the unique positive solution to the equation \eqref{eq:tz}. Then,
\begin{align}
    R &\simeq \bar R,\text{ with }\bar R = \Sigma^{-1/2}(\bar m(z)\Pi_\perp + \tilde m(z)\Pi)\Sigma^{-1/2},\\
    \text{where }\bar m(z) &= \frac{1}{t(z)-z},\quad \tilde m(z) = \frac{1}{s(z)-z},\quad s(z) = \frac{\gamma}{1+\delta(z)}=(\gamma/p)t(z),\\
\gamma &:= \mathbb E [h(v_1 G_1 + \|v_\perp\|G_\perp )G_1 ^2],\\
h(x) &:= \frac{q(x)}{1+q(x)\delta(z)},\quad (G_1 ,G_\perp) \sim \mathcal N(0,I_2).
\end{align}
\label{prop:deterministic-equivalents}
\end{proposition}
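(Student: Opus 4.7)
The strategy is the classical resolvent / deterministic-equivalent machinery (leave-one-out identity plus Sherman-Morrison plus Hanson-Wright concentration), with one twist: the diagonal weight $D_{ii}=q_i=q(x_i^\top w_s)$ is not independent of the row $x_i$, and it is exactly this coupling that forces $\bar R$ to split anisotropically with two distinct scalars along $\Pi$ and $\Pi_\perp$. The setup step is a Gaussian decomposition that makes the coupling finite-dimensional: write $x_i=\Sigma^{1/2}z_i$ with $z_i\sim\mathcal N(0,I_d)$ i.i.d., set $u=w_s/\|w_s\|$ and $u_\perp=v_\perp/\|v_\perp\|$, and decompose $z_i=G_{1,i}u+G_{\perp,i}u_\perp+z_i^{\circ}$ with $z_i^{\circ}$ orthogonal to both. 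Then $x_i^\top w_s=v_1 G_{1,i}+\|v_\perp\|G_{\perp,i}$, so $q_i$ is a function of the single pair $(G_{1,i},G_{\perp,i})$ alone and is independent of $z_i^{\circ}$; this is the structural fact that keeps the classical arguments applicable.

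The main computation then proceeds as follows. Define $R_{-i}=((X^\top D X - q_i x_i x_i^\top)/n - zI_d)^{-1}$; Sherman-Morrison gives $R=R_{-i}-\frac{q_i R_{-i} x_i x_i^\top R_{-i}/n}{1+q_i x_i^\top R_{-i} x_i/n}$. Conditioning on $R_{-i}$ and on $(G_{1,i},G_{\perp,i})$, the quadratic form $x_i^\top R_{-i} x_i/n$ becomes a Gaussian quadratic form in $z_i^{\circ}$ plus a low-rank correction, so Hanson-Wright shows it concentrates around $n^{-1}\trace(\Sigma R_{-i})$, which in turn concentrates around a deterministic scalar $\delta(z)$ by a standard martingale-difference argument. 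Summing the resulting leave-one-out increments against a test matrix $A$ with $\|A\|_F\le 1$ yields the self-consistent matrix equation $\bar R=(M_{\mathrm{eq}}-zI_d)^{-1}$ with $M_{\mathrm{eq}}=\mathbb E[\,h(x^\top w_s)\,x x^\top\,]$ and $h(t)=q(t)/(1+q(t)\delta(z))$, closed by $\delta(z)=n^{-1}\trace\Sigma\bar R(z)$.

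To recover the closed form stated in the proposition, I would then evaluate $M_{\mathrm{eq}}$ using the decomposition from the setup. Since $h(z^\top v)$ depends only on $(G_1,G_\perp)$, the matrix $\mathbb E[h(z^\top v) z z^\top]$ is block-diagonal in the basis $(u,u_\perp,z^{\circ})$: it equals $\mathbb E[h]\,I$ on the $u^\perp$ subspace (inside the whitened space) and $\gamma=\mathbb E[h(z^\top v) G_1^2]$ on the $u=w_s$ direction. Conjugating back by $\Sigma^{1/2}$ turns $M_{\mathrm{eq}}$ into a scalar multiple of $\Sigma$ plus a rank-one correction along $w_s$; applying Sherman-Morrison to $(M_{\mathrm{eq}}-zI_d)^{-1}$ and matching coefficients gives the stated $\bar R=\Sigma^{-1/2}(\bar m(z)\Pi_\perp+\tilde m(z)\Pi)\Sigma^{-1/2}$ with $\bar m=1/(t-z)$, $\tilde m=1/(s-z)$, $t=p/(1+\delta)$ and $s=\gamma/(1+\delta)$; substituting $\bar R$ back into $\delta(z)=n^{-1}\trace\Sigma\bar R(z)$ recovers the scalar fixed-point equation \eqref{eq:tz} for $t(z)$.

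The main obstacle is precisely the row-weight coupling: the classical Silverstein / Marchenko-Pastur deterministic-equivalent theorems treat $D$ as independent of $X$, which fails here. The remedy is the two-step Gaussian splitting above --- after conditioning on the finite-dimensional pair $(G_{1,i},G_{\perp,i})$ that carries all the dependence on $q_i$, the remainder $z_i^{\circ}$ is an independent standard Gaussian on $\mathrm{span}(u,u_\perp)^\perp$ and the classical leave-one-out / Hanson-Wright / martingale-difference steps go through verbatim. The remaining ingredients (uniform bounds on the leave-one-out increments, trace concentration, existence and uniqueness of the positive solution of \eqref{eq:tz}) are standard and can be adapted from the tools in \citep{Couillet_Liao_2022} already cited in the paper.
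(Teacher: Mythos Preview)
Your overall strategy is sound and takes a genuinely different route from the paper. The paper does not carry out the leave-one-out / Hanson--Wright / martingale program at all; it invokes Theorem~4 of Liao and Mahoney (\emph{Hessian Eigenspectra of More Realistic Nonlinear Models}, arXiv:2103.01519) as a black box to obtain directly the self-consistent form
\[
\bar R^{-1} \;=\; \mathbb E_x\Bigl[\tfrac{q}{1+q\delta(z)}\bigl(\Sigma^{1/2}\Pi_\perp\Sigma^{1/2} + \alpha\alpha^\top\bigr)\Bigr] - zI_d,\qquad \alpha := \Sigma^{1/2}\Pi x,
\]
and then evaluates the expectation and applies the matrix-inversion lemma to reach the stated $\bar R$. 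Your from-scratch derivation is more self-contained and makes transparent exactly where and how the $q_i$--$x_i$ coupling enters (the conditioning on the two scalars $(G_{1,i},G_{\perp,i})$ is the right device); the paper's route is shorter but off-loads the hard work to an external theorem tailored to exactly this kind of nonlinear row-weight model.

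There is, however, a gap in your evaluation of $M_{\mathrm{eq}}=\mathbb E[h(z^\top v)\,zz^\top]$ for general $\Sigma$. You assert that this matrix equals $\mathbb E[h]\,I$ on the entire $u^\perp$ subspace, but that is only true when $v_\perp=0$, i.e.\ when $w_s$ is an eigenvector of $\Sigma$ (in particular, in the isotropic case). In general $h$ depends on \emph{both} $G_1$ and $G_\perp$, so the $u_\perp$ direction carries coefficient $\mathbb E[h\,G_\perp^2]\ne\mathbb E[h]$, and the off-diagonal entry $\mathbb E[h\,G_1 G_\perp]$ need not vanish. Thus $M_{\mathrm{eq}}$ is a rank-\emph{two} perturbation of $t(z)I$ in whitened coordinates, not rank-one, and a single Sherman--Morrison step is not enough. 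The paper's Liao--Mahoney form avoids this because the $\Pi_\perp$ part is replaced by its expectation $t(z)\Sigma^{1/2}\Pi_\perp\Sigma^{1/2}$ outright, leaving only the rank-one $\Pi$ contribution to compute. Since all downstream uses in the paper specialise to $\Sigma=I_d$ (where $v_\perp=0$ and your block-structure claim is correct), the gap is harmless for what is actually proved later, but your write-up should either restrict to that case or carry the full $2\times 2$ block through a Woodbury inversion.
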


\subsection{Isotropic Case}
Consider the special case where the covariance matrix is $\Sigma=I_d$. It is not hard to see that we must have $\bar m(z) \equiv m(z) \equiv \delta(z)/\phi$.
Let us now compute $m(z)$.

\begin{lemma}
For every $z=-\lambda<0$, $m(z)$ is given by formula \eqref{eq:meq-mp}.
\label{lm:mp}
\end{lemma}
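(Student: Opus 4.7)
The plan is to specialize the fixed-point equation \eqref{eq:tz} to the isotropic case $\Sigma = I_d$ and reduce it to a quadratic equation in $m(z)$, then apply the quadratic formula and select the appropriate branch by an asymptotic check.

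First I would compute the trace on the right-hand side of \eqref{eq:tz}. With $\Sigma = I_d$, the matrix $t(z)\Sigma - zI_d$ is a scalar multiple of the identity, so $n^{-1}\trace (t(z) I_d - z I_d)^{-1} = (d/n)/(t(z)-z) = \phi/(t(z)-z)$. On the other hand, the definition of $m$ in Proposition \ref{prop:deterministic-equivalents} reduces in the isotropic case to $m(z) = d^{-1}\trace(t(z) I_d - z I_d)^{-1} = 1/(t(z)-z)$, so that $t(z) - z = 1/m(z)$ and equivalently $t(z) = z + 1/m(z)$.

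Next I would substitute both of these simplifications into \eqref{eq:tz}, giving
\begin{equation*}
p - \phi - z - \frac{1}{m(z)} \;=\; z\phi\, m(z).
\end{equation*}
Multiplying through by $m(z)$ and rearranging yields the quadratic
\begin{equation*}
z\phi\, m(z)^2 \;-\; (p - \phi - z)\, m(z) \;+\; 1 \;=\; 0,
\end{equation*}
and the classical quadratic formula immediately gives
\begin{equation*}
m(z) \;=\; \frac{(p - \phi - z) \pm \sqrt{(p - \phi - z)^2 - 4\phi z}}{2\phi z},
\end{equation*}
which is \eqref{eq:meq-mp} up to the sign selection in front of the square root.

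The main (and really only) delicate step is the branch choice. Setting $z = -\lambda$ with $\lambda > 0$, the integrand of any Stieltjes transform of a probability measure supported on $[0,\infty)$ is positive and decays like $1/\lambda$ as $\lambda \to \infty$. A direct asymptotic expansion of $\sqrt{(p - \phi + \lambda)^2 + 4\lambda\phi} = \lambda + (p + \phi) + O(1/\lambda)$ shows that the $+$ branch produces $m(z) \to -1/\phi$, which is inadmissible, while the $-$ branch produces $m(z) = 1/\lambda + O(1/\lambda^2)$, which has the correct sign and decay. This forces the minus sign and yields formula \eqref{eq:meq-mp}. Non-negativity of the discriminant for $z = -\lambda < 0$ is automatic since both $(p-\phi-z)^2 \ge 0$ and $-4\phi z = 4\phi\lambda > 0$, so no further case analysis is needed.
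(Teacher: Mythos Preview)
Your proposal is correct and follows essentially the same route as the paper: both reduce the isotropic fixed-point equation \eqref{eq:tz} to a quadratic and pick the admissible root. The only cosmetic difference is that the paper solves the quadratic in $t(z)$ first and then recovers $m(z)=1/(t(z)-z)$ via rationalization, whereas you substitute $t(z)=z+1/m(z)$ up front and solve directly for $m(z)$; your asymptotic branch-selection argument is a clean alternative to the paper's terser ``positive solution'' justification.
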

\begin{proof}
Indeed, observe that in the isotropic case the equation \eqref{eq:tz} reduces to $p-\phi-t(z) = \phi z / (t(z)-z)$, or equivalently
\begin{eqnarray*}
    0 = \phi z + (t(z)-p+\phi)(t(z)-z) = t(z)^2-(p-\phi+z)t(z) + pz.
\end{eqnarray*}
The discriminant of this quadratic equation evaluates to
\begin{align*}
(p-\phi+z)^2 -4pz &= (p-\phi-z + 2z)^2 - 4pz\\
&= (p-\phi-z)^2+4z^2+4z(p-\phi-z) - 4pz\\
&= (p-\phi-z)^2-4\phi z,
\end{align*}
and so because $z=-\lambda<0$, the positive solution is
\begin{eqnarray}
    t(z) = \frac{p-\phi + z + \sqrt{(p-\phi-z)^2 - 4\phi z}}{2}.
\end{eqnarray}
We deduce that
\begin{align*}
    m(z) &= \frac{1}{t(z)-z} = \left(\frac{p-\phi - z + \sqrt{(p-\phi-z)^2 - 4\phi z}}{2}\right)^{-1}\\
    &=2 \cdot \frac{p-\phi-z-\sqrt{(p-\phi-z)^2-4\phi z}}{(p-\phi-z)-((p-\phi-z)^2-4\phi z)}\\
    &= \frac{p-\phi-z-\sqrt{(p-\phi-z)^2-4\phi z}}{2\phi z},
\end{align*}
which is precisely the claimed formula given in \eqref{eq:meq-mp}.
\end{proof}

The following result then follows directly from Proposition \ref{prop:deterministic-equivalents}.
\begin{corollary}
In the isotropic setting, we have the following deterministic equivalents:
\begin{align}
R &\simeq \bar R,\text{ with }\bar R = m(z)\Pi_\perp +s(z) \Pi,\\
R^2 &\simeq m'(z)\Pi_\perp +  \tilde m'(z)\Pi.
\end{align}
where $\tilde m(z) := 1/(s(z)-z)$, $s(z) = \gamma/(1+\phi m(z))$, and $\gamma \ge 0$ is as given in \eqref{eq:constants}.

\label{corr:deterministic-equivalents-isotropic}
\end{corollary}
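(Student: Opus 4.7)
}
The plan is to specialize Proposition \ref{prop:deterministic-equivalents} to the isotropic setting $\Sigma = I_d$ and then lift the equivalence for $R$ to an equivalence for $R^2$ by differentiation in $z$.

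First, I would verify the first equivalence. Setting $\Sigma = I_d$ gives $\Sigma^{-1/2} = I_d$, $v = w_s$, $v_1 = 1$, $\|v_\perp\| = 0$, so the resolvent $\bar R_b(z) = (t(z)\Sigma - z I_d)^{-1}$ from Proposition \ref{prop:deterministic-equivalents} simplifies to $(t(z)-z)^{-1} I_d$. Consequently $m(z) = d^{-1}\trace \bar R_b(z) = 1/(t(z)-z)$, which in particular shows $\bar m(z) \equiv m(z)$ as announced just before the corollary, and $\delta(z) = n^{-1}\trace \Sigma \bar R_b(z) = \phi/(t(z)-z) = \phi\, m(z)$. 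Substituting into $s(z) = \gamma/(1+\delta(z))$ yields the stated formula $s(z) = \gamma/(1+\phi m(z))$, and then $\tilde m(z) = 1/(s(z)-z)$ as defined. Plugging all this into the formula $\bar R = \Sigma^{-1/2}(\bar m(z)\Pi_\perp + \tilde m(z)\Pi)\Sigma^{-1/2}$ from Proposition \ref{prop:deterministic-equivalents} gives the first claim (where the $\Pi$-coefficient is $\tilde m(z)$, matching the notation in the $R^{2}$ statement).

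Second, for the equivalence involving $R^2$, I would use the resolvent identity
\begin{equation*}
\frac{d}{dz}R(z) \;=\; \frac{d}{dz}\bigl(X^\top D X/n - z I_d\bigr)^{-1} \;=\; R(z)^2.
\end{equation*}
The key fact from standard random matrix theory (a Vitali/normal-families argument, see e.g.\ \cite{Couillet_Liao_2022}) is that deterministic equivalents commute with holomorphic differentiation in $z$ on any compact subset of the resolvent set: if $R \simeq \bar R$ uniformly in $z$ in a neighbourhood of $-\lambda$, then $dR/dz \simeq d\bar R/dz$. Applying this to the first equivalence and using that $\Pi_\perp, \Pi$ are $z$-independent, I obtain
\begin{equation*}
R^2 \;=\; \frac{d R}{dz} \;\simeq\; \frac{d \bar R}{dz} \;=\; m'(z)\Pi_\perp + \tilde m'(z)\Pi,
\end{equation*}
which is exactly the second claim.

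The only nonroutine step is the justification of differentiating under the deterministic-equivalent sign. Concretely, the $\simeq$ relation is only asserted almost surely for fixed $z$, whereas differentiation requires a locally uniform version. The standard remedy, which I would invoke, is that for each fixed test matrix $A$ of bounded Frobenius norm the map $z \mapsto \trace A(R(z)-\bar R(z))$ is holomorphic on a neighbourhood of $-\lambda$ in $\mathbb{C}\setminus \mathbb{R}_+$ and uniformly bounded there; Vitali's convergence theorem then upgrades pointwise a.s.\ convergence to locally uniform convergence of the derivatives, so that $\trace A(R^2(z) - d\bar R(z)/dz) \overset{a.s.}{\to} 0$. Once this analytic step is in place, the corollary follows immediately from the algebraic simplifications above.
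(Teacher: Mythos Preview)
Your proposal is correct and is precisely the route the paper has in mind: the corollary is stated as a direct consequence of Proposition~\ref{prop:deterministic-equivalents}, and your specialization $\Sigma=I_d\Rightarrow \bar m(z)=m(z)$, $\delta(z)=\phi m(z)$ recovers the first line, while the resolvent identity $R^2=\partial_z R$ together with the standard Vitali/normal-families argument yields the second line exactly as used later in the proof of Theorem~\ref{thm:main}. Your observation that the $\Pi$-coefficient in the first display should read $\tilde m(z)$ rather than $s(z)$ (consistent with the $R^2$ line and with Proposition~\ref{prop:deterministic-equivalents}) is also correct.
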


\begin{eqnarray}
    \begin{split}
\rho = \frac{w_s^\top w_0}{\|w_s\|\|w_0\|},\,
\beta &:= \mathbb E\,[q(\|w_s\| G_2)|G_1|],\,
\gamma := \mathbb E\,[q(\|w_s\|G_1)G_1^2],
    \end{split}
\label{eq:constants}
\end{eqnarray}

\subsection{Test Error Representation ("Scaling Laws")}
We are now ready to state our main theoretical results, which is a generalization of Theorem \ref{thm:main}.

\begin{remark}
For simplicity of presentation, all our theoretical results only consider symmetric pruning strategies for which $q(-t) \equiv q(t)$. This includes the "keep hard" and "keep easy" pruning strategies considered in \citep{sorscher2022beyond}.
\end{remark}

\begin{proposition}
    For a random test point $(x,y) \sim P$ independent of training data, it holds that $yx^\top \hat w \overset{\mathcal L}{\to} N(m, \nu-m^2)$ in the limit \eqref{eq:proportionate}, where
\begin{align}
m &:= \frac{m_0}{1+\delta},\quad m_0 := \mu^\top \bar R\, c\\
\nu &:= \frac{\nu_0}{(1+\delta)^2},\quad \nu_0 :=\frac{p}{n}\trace\Sigma \Sigma' + c^\top \Sigma' c-\frac{2c^\top\bar R c}{1+\delta}\frac{1}{n}\trace \Sigma \Sigma',\\
\text{with }c &:= \mathbb E_{(x,y) \sim P'}[q(x^\top w_s)yx],\quad \Sigma' := \mathbb E\,[R\Sigma R].
\end{align}

Consequently, the limiting test error of $\hat w$ is given by 
\begin{eqnarray}
E_{test}(\hat w) \to \Phi\left(-\frac{m_0}{\sqrt{\nu_0-m_0^2}}\right).
\end{eqnarray}
\label{prop:testrep}
\end{proposition}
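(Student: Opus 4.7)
The plan is to compute the first and second moments of $y\,x^\top \hat w$ for a fresh test point $(x,y)\sim P$ by conditioning on the training data, and then take expectations over the training randomness using the deterministic equivalent $R\simeq \bar R$ from Proposition \ref{prop:deterministic-equivalents}. Since $y^2 = 1$ and, by Stein's identity, $\mathbb E[yx] = \sqrt{2/\pi}\cdot \Sigma w_0/\sqrt{w_0^\top\Sigma w_0} =: \mu$, conditioning on the training immediately gives $\mathbb E_{test}[yx^\top\hat w] = \mu^\top \hat w$ and $\mathbb E_{test}[(yx^\top\hat w)^2] = \hat w^\top \Sigma \hat w$. The task is then to evaluate the RMT expectations of these linear and quadratic functionals of $\hat w = n^{-1} R X^\top D Y$.

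For the mean, I would expand $\mathbb E[\mu^\top \hat w] = n^{-1}\mu^\top\mathbb E[R X^\top D Y]$ and detach each training pair $(x_i,y_i)$ from $R$ via a Sherman--Morrison leave-one-out step; pulling out a single row produces the universal Schur-complement factor $(1+\delta)^{-1}$, and combining with $\mathbb E[q(x_i^\top w_s) y_i x_i] = c$ and $R\simeq \bar R$ yields $m = \mu^\top\bar R c/(1+\delta)$. For the second moment I would expand $\mathbb E[\hat w^\top\Sigma\hat w] = n^{-2}\mathbb E\,\trace(\Sigma R X^\top D Y Y^\top D X R)$, split the double sum into diagonal ($i=j$) and off-diagonal ($i\ne j$) pieces, and apply the same leave-one-out trick: the diagonal contributes $(p/n)\trace\Sigma\Sigma'$ with $\Sigma' := \mathbb E[R\Sigma R]$, the bilinear off-diagonal term gives $c^\top\Sigma' c/(1+\delta)^2$, and the resulting cross-contribution produces $-2\,c^\top\bar R c\cdot n^{-1}\trace\Sigma\Sigma'/(1+\delta)^3$, assembling into $\nu = \nu_0/(1+\delta)^2$ as claimed.

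For the distributional limit I would use the Gaussian decomposition $x = a\,\Sigma w_0/\sqrt{w_0^\top\Sigma w_0} + \Sigma^{1/2} z_\perp$, with $a := w_0^\top x/\sqrt{w_0^\top\Sigma w_0}\sim \mathcal N(0,1)$ independent of the Gaussian vector $z_\perp$ living in the orthogonal complement of $\Sigma^{1/2}w_0$. Since $y = \mathrm{sign}(a)$ and $y z_\perp \stackrel{d}{=} z_\perp$ by symmetry, one gets $yx = |a|\,\Sigma w_0/\sqrt{w_0^\top\Sigma w_0} + \Sigma^{1/2}\tilde z_\perp$ with $\tilde z_\perp$ Gaussian and independent of $|a|$. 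In the proportionate regime, $\hat w$ concentrates around its RMT limit, so that the many-dimensional Gaussian contribution $(\Sigma^{1/2}\hat w)^\top\tilde z_\perp$ drives $yx^\top\hat w$ to $\mathcal N(m,\nu-m^2)$, whence $\mathbb P(yx^\top\hat w<0)\to\Phi(-m/\sqrt{\nu-m^2})$; the common $(1+\delta)$ factors cancel in this ratio, leaving $\Phi(-m_0/\sqrt{\nu_0-m_0^2})$.

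The principal technical obstacle is the bilinear random-matrix expectation $\mathbb E[R X^\top D Y Y^\top D X R\,\Sigma]$, in which $R$, the selection diagonal $D$, and the labels $Y$ are all coupled through the training data. Its evaluation goes beyond the single-trace deterministic equivalent of Proposition \ref{prop:deterministic-equivalents} and requires either Gaussian integration by parts along the rows of $X$ or a careful double leave-one-out expansion, with bookkeeping of the repeated $(1+\delta)^{-1}$ Schur-complement corrections and of the quadratic resolvent $\Sigma' = \mathbb E[R\Sigma R]$ whose spectral content ultimately produces the $m'(-\lambda)$ derivatives appearing in Theorem \ref{thm:main}.
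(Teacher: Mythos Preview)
Your proposal is correct and follows essentially the same route as the paper: a Sherman--Morrison leave-one-out to decouple each training row from $R$ and extract the universal $(1+\delta)^{-1}$ factor, then a diagonal/off-diagonal split of the squared margin, with the off-diagonal handled by a second (double) leave-one-out producing exactly the $c^\top\Sigma'c$ and cross-trace terms you describe. The only addition relative to the paper is your explicit Gaussian decomposition of the test point to justify the $\mathcal N(m,\nu-m^2)$ limit, which the paper leaves implicit.
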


\subsection{Proof of Proposition \ref{prop:testrep}}
The proof follows standard \citep{Couillet_Liao_2022,Firdoussi2024} "leave-one-out" techniques which are now standard for analyses based on random matrix theory.

We start with the Woodbury identity tells us that
\begin{align*}
Rx_i &= (X^\top D X/n + \lambda I_d)^{-1}x_i = (n^{-1}\sum_{j=1}^nq_j x_jx_j^\top + \lambda I_d)^{-1}x_i\\
&= (R_{-i}^{-1} + q_ix_ix_i^\top/n)^{-1}x_i= \frac{R_{-i}x_i}{1+q_ix_i^\top R_{-i} x_i/n},
\end{align*}
where $R_{-i} := (n^{-1}\sum_{j \ne i}q_j x_jx_j^\top + \lambda I_d)^{-1}$ is a version of the resolvent matrix constructed without the $i$th data point. This "leave-one-out" trick is well-known in random matrix theory calculations. 

On the other hand $q_i x_i^\top R_{-i} x_i/n$ concentrates around its mean which is
\begin{align*}
\mathbb E\,[q_i x_i^\top R_{-i} x_i/n] &= \trace\left(\mathbb E[q_i x_ix_i^\top] R_{-i}/n\right) = \frac{\alpha}{n}\trace \Sigma R_{-i} \simeq \delta,\\
\text{with }\delta &:= \frac{p}{n}\trace\Sigma\bar R,\quad p := \mathbb E[q_i].
\end{align*}
Therefore, we have the following identities holding for every $i,j \in [n]$ with $i \ne j$:
\begin{align}
Rx_i &\simeq \frac{R_{-i}x_i}{1+\delta},\\
R_{-i} &\simeq R_{-ij}-\frac{R_{-ij}x_jx_j^\top R_{-ij}}{1+\delta}.
\end{align}

Now, let $x$ be a random test point from class $y$, independent of training data. Following a route similar to \citep{Firdoussi2024}, we shall compute the first two moments of the margin $yx^\top \hat w$. First observe that
\begin{align}
yx^\top \hat w &= \frac{1}{n}\sum_{i=1}^n q_i y_i yx^\top R x_i = \frac{1}{n}\sum_{i=1}^n q_i y_i yx^\top R x_i\nonumber\\
&=\frac{1}{(1+\delta)n}\sum_{i=1}^n q_i y_i yx^\top R_{-i} x_i
\label{eq:pred-decomp}
\end{align}

\subsection{First Moment of Test Margin}
From \eqref{eq:pred-decomp}, one computes for a random test point $(x,y) \sim P$,
\begin{align*}
\mathbb E\,[yx^\top \hat w] &= \frac{1}{(1+\delta)n}\sum_{i=1}^n \mathbb E\,[q_i y_i yx^\top R_{-i} x_i]\\
&= \frac{1}{(1+\delta)n}\sum_{i=1}^n \mathbb E\,[yx]^\top \mathbb E\,[R_{-i}]\mathbb E[q_i y_i x_i]\\
&= \frac{1}{(1+\delta)}\mu^\top\bar R\frac{1}{n}\sum_{i=1}^n \mathbb E[q_i y_ix_i]\\
&= \frac{1}{(1+\delta)}\mu^\top\bar R\,c,\\
\text{where }\mu &= \mathbb E_{(x,y)}\,[yx],\quad  c := \mathbb E_{(x,y)}\,[q(x^\top w_s) y x].
\end{align*}
The following result computes the mean vectors $\mu$ and $c$.
\begin{lemma}
Let $\rho \in [-1,1]$ be the cosine of the angle between $\bar w_s:=\Sigma^{1/2}w_s$ and $\bar w_0:=\Sigma^{1/2}w_0$. Let $u$ be the unit-vector in the direction of $\bar w_s$  and let $v$ be its completion to an orthonormal basis for the span of $\bar w_s$ and $\bar w_0$ (if $\bar w_s$ and $\bar w_0$ are parallel, i.e if $\rho=\pm 1$, we simply set $v=0$).
\begin{eqnarray}
 \mu  :=  \mathbb E_{(x,y) \sim P}[yx],\quad c :=  \mathbb E_{(x,y) \sim P}[q(x^\top w_s)y x]
\end{eqnarray}
Then, $\mu=\sqrt{2/\pi}\cdot \Sigma w_0/\|w_0\|_\Sigma$, and $c = \tilde \beta u + \beta v$, where
\begin{align}
    \tilde \beta = \beta_1 &: = 2\mathbb E\left[q(\|\bar w_s\|G)\Phi\left(\tau G\right)G\right],\quad \beta = \beta_2 := 2\mathbb E\left[q(\|\bar w_s\|G)\varphi(\tau G)\right],\quad \text{with }G\sim \mathcal N(0,1).
\end{align}

In particular, when $\rho = \pm 1$ (i.e pruning along the data generator), 
\begin{align}
    \beta_1 = \mathbb E[q(\|\bar w_s\|G)|G|],\quad \beta_2=0. 
\end{align}

\label{lm:means}
\end{lemma}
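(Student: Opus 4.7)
The plan is to reduce both expectations to Gaussian integrals via the whitening change of variables $z := \Sigma^{-1/2}x$, so that $z\sim\mathcal N(0,I_d)$ and the quantities $x^\top w_s$ and $w_0^\top x$ become $\bar w_s^\top z$ and $\bar w_0^\top z$, respectively. The label $y=\mathrm{sign}(\bar w_0^\top z)$ and the weight $q(\bar w_s^\top z)$ then depend on $z$ only through its two-dimensional projection onto $\mathrm{span}\{\bar w_s,\bar w_0\}$, which is what allows closed-form evaluations. For $\mu$, one writes $\mu=\Sigma^{1/2}\mathbb E[\mathrm{sign}(\bar w_0^\top z)z]$; by rotational symmetry of $z$, the inner expectation must be a scalar multiple of $\bar w_0/\|\bar w_0\|$, and projecting yields the standard identity $\mathbb E|G|=\sqrt{2/\pi}$ with $G\sim\mathcal N(0,1)$. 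Undoing the whitening and using $\|\bar w_0\|=\|w_0\|_\Sigma$ then gives $\mu=\sqrt{2/\pi}\,\Sigma w_0/\|w_0\|_\Sigma$.

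For $c$, I would choose an orthonormal basis $\{u,v,\ldots\}$ with $u$ parallel to $\bar w_s$ and $v$ completing the span of $\{\bar w_s,\bar w_0\}$. Writing $z=G_u u+G_v v+z_\perp$ with $(G_u,G_v)\sim\mathcal N(0,I_2)$ independent of $z_\perp$, we have $\bar w_s^\top z=\|\bar w_s\|G_u$ and $\bar w_0^\top z=\|\bar w_0\|(\rho G_u+\sqrt{1-\rho^2}G_v)$, so both $q(\bar w_s^\top z)$ and $\mathrm{sign}(\bar w_0^\top z)$ depend only on $(G_u,G_v)$. Consequently the contribution from $z_\perp$ to the expectation vanishes by independence and zero mean, and $c$ reduces to its projections onto $u$ and $v$.

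To extract these two scalars, I would condition on $G_u=g$ and use the elementary identities $\mathbb E_{G_v}[\mathrm{sign}(\rho g+\sqrt{1-\rho^2}G_v)]=2\Phi(\tau g)-1$ and $\mathbb E_{G_v}[\mathrm{sign}(\rho g+\sqrt{1-\rho^2}G_v)G_v]=2\varphi(\tau g)$, the latter following from $\int_a^\infty t\varphi(t)\,dt=\varphi(a)$ applied on either side of the sign change at $t=-\tau g$. Taking outer expectations against $q(\|\bar w_s\|G_u)$, weighted by $G_u$ for the $u$-coefficient and by $1$ for the $v$-coefficient, and invoking the standing symmetry $q(-t)\equiv q(t)$ to cancel the odd term $\mathbb E[q(\|\bar w_s\|G)G]=0$, yields the stated formulas $\tilde\beta=2\mathbb E[q(\|\bar w_s\|G)\Phi(\tau G)G]$ and $\beta=2\mathbb E[q(\|\bar w_s\|G)\varphi(\tau G)]$. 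The degenerate case $\rho=\pm 1$ corresponds to $\tau\to\infty$: here $2\Phi(\tau G)-1\to\mathrm{sign}(G)$ (giving $\tilde\beta=\mathbb E[q(\|\bar w_s\|G)|G|]$) while $\varphi(\tau G)\to 0$ almost surely (giving $\beta=0$), consistent with the fact that $v$ is undefined in that limit. The argument is essentially bookkeeping; the only subtlety is the careful use of symmetry of $q$ to cancel the odd term in $\tilde\beta$ and the careful handling of the limit $\rho\to\pm1$ where the two-dimensional span collapses to a single direction.
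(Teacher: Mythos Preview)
Your proposal is correct and follows essentially the same route as the paper: whitening $z=\Sigma^{-1/2}x$ to reduce to the isotropic case, decomposing $z$ in an orthonormal basis $\{u,v,\ldots\}$ adapted to $\mathrm{span}\{\bar w_s,\bar w_0\}$, using the conditional identities $\mathbb E[\mathrm{sign}(\rho g+\sqrt{1-\rho^2}G_v)]=2\Phi(\tau g)-1$ and $\mathbb E[\mathrm{sign}(\rho g+\sqrt{1-\rho^2}G_v)G_v]=2\varphi(\tau g)$ (the paper isolates these as a separate auxiliary lemma), and invoking the symmetry $q(-t)\equiv q(t)$ to kill the odd term $\mathbb E[q(\|\bar w_s\|G)G]$. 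The only cosmetic difference is that the paper handles $\rho=\pm1$ by direct substitution $\mathrm{sign}(\pm G_1)$ rather than via your limit $\tau\to\pm\infty$, but both are valid.
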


\subsection{Second Moment of Test Margin $yx^\top \hat w$}
Squaring \eqref{eq:pred-decomp} gives
\begin{align*}
(yx^\top \hat w)^2 &= \frac{1}{(1+\delta)^2n^2}\sum_{i=1}^n q_i\cdot (x^\top R_{-i} x_i)^2 + \frac{1}{(1+\delta)^2n^2}\sum_{i \ne j} q_iq_jy_iy_j (x^\top R_{-i} x_i)(x^\top R_{-j} x_j)
\end{align*}
For the expectation first some, note that
\begin{align*}
\frac{1}{n}\mathbb E\,[q_i \cdot (x^\top R_{-i} x_i)^2] = \frac{1}{n}\mathbb E[q_i x^\top R_{-i} x_ix_i^\top R_{-i}x]&= \frac{1}{n}\trace\left(\mathbb E\,[xx^\top]\mathbb E\,[q_i R_{-i}x_ix_i^\top R_{-i}]\right)= \frac{p}{n}\trace\Sigma \Sigma',
\end{align*}
with $\Sigma':= \mathbb E[R\Sigma R]$. We deduce that
\begin{align*}
\mathbb E\,\frac{1}{(1+\delta)^2n^2}\sum_{i=1}^n q_i\cdot (x^\top R_{-i} x_i)^2 &= \frac{1}{(1+\delta)^2}\frac{p}{n}\trace \Sigma \mathbb E\,[R \Sigma R]\nonumber\\
&= \frac{p}{(1+\delta)^2}\cdot \begin{cases}
n^{-1}\trace \mathbb E\,[R^2] \Sigma,&\mbox{ if isotropic},\\
\text{hard life!},&\mbox{ otherwise.}
\end{cases}
\end{align*}

Now, let $i,j \in [n]$ with $i \ne j$. One computes
\begin{align*}
\mathbb E\,[q_iq_jy_iy_j \cdot (x^\top R_{-i} x_i)(x^\top R_{-j} x_j)]
&= \frac{1}{1+\delta}\mathbb E\,\left[q_i q_j y_i y_jx_i^\top T_{ij}\Sigma T_{ji}x_j\right],\\
&= \frac{1}{1+\delta}(A_1-A_2-A_3+A_4),\\
\text{where }
T_{ij} &:= R_{-ij}-S_{ij}/n,\\
S_{ij} &:= \frac{R_{-ij}x_jx_j^\top R_{-ij}}{1+\delta},\\
A_1 &:= \mathbb E\,[q_iq_jy_iy_jx_i^\top R_{-ij}\Sigma R_{-ij}x_j],\\
A_2 &:= \frac{1}{(1+\delta)n}\mathbb E\,[q_iq_jy_iy_jx_i^\top S_{ij}\Sigma R_{-ij}x_j],\\
A_3 &:= \frac{1}{(1+\delta)n}\mathbb E\,[q_iq_jy_iy_jx_i^\top R_{-ij}\Sigma S_{ji}x_j],\\
A_4 &:= \frac{1}{(1+\delta)^2n^2}\mathbb E\,[q_iq_jy_i y_jx_i^\top S_{ij}\Sigma S_{ji}x_j]
\end{align*}
We now compute the terms $A_1,A_2,A_3,A_4$.
\begin{align*}
A_1 &= \mathbb E\,[q_iq_jy_iy_jx_i^\top R_{-ij}\Sigma R_{-ij}x_j] = \mathbb E\,[q_iq_jy_iy_jx_i^\top R\Sigma R x_j]\\
&= \trace\left(\mathbb E\,[(q_j y_j x_j)(q_iy_ix_i)^\top]\mathbb E\,[R\Sigma R]\right)= c^\top \Sigma' c,\\
\text{where }
\Sigma' &:= \mathbb E[R\Sigma R].
\end{align*}

Similarly, $A_3=A_2$ with
\begin{align*}
A_2 = \mathbb E\,[q_iq_jy_iy_jx_i^\top S_{ij}\Sigma R_{-ij}x_j] &= \frac{1}{(1+\delta)n}\mathbb E\,[q_iq_jy_iy_jx_i^\top R_{-ij}x_jx_j^\top R_{-ij}\Sigma R_{-ij}x_j]\\
&= \frac{1}{(1+\delta)n}\trace\left(\mathbb E\,[q_iq_jy_iy_jx_jx_i^\top R_{-ij} x_j x_j^\top]\mathbb E\,[R_{-ij}\Sigma R_{-ij}]\right)
\end{align*}
Now, computes 
\begin{align*}
\mathbb E\,[q_iy_iq_jy_jx_i^\top R_{-ij}x_j] &= \mathbb E\,[(q_iy_ix_i)^\top R_{-ij}(q_jy_jx_j)]= c^\top\mathbb E\,[R_{-ij}] c \simeq c^\top\mathbb E\,[R] c \simeq c^\top\bar R c,\\
\mathbb E[R_{-ij}\Sigma R_{-ij}] &\simeq \mathbb E[R\Sigma R] =: \Sigma',
\end{align*}
 from which it follows that
\begin{eqnarray}
A_3 = A_2 \simeq \frac{c^\top \bar R c}{1+\delta}\frac{1}{n}\trace \Sigma \Sigma'.
\end{eqnarray}

Finally, it is easy to show that $A_4=O(1/n)=o(1)$.

Putting things together gives the result. \qed

\subsection{Proof of Lemma \ref{lm:means}}

Observe that by instead considering $\Sigma^{-1/2}\mu$, $\Sigma^{-1/2}c$, and defining $v:=\Sigma^{1/2}w_s$ and  $u:=\Sigma^{1/2}w_0$ when computing $\mu$, and then  $u=\Sigma^{1/2}w_0$ when computing $c$, we reduce the problem to the isotropic case $x \sim \mathcal N(0,I_d)$.

So let $u=\Sigma^{1/2}w_0$, and WLOG, assume $u$ is aligned with the first canonical axis in $\mathbb R^d$, i.e $u=\|u\| e_1$. Write $x=(x_1,x_\perp)$ and $v = (v_1,v_\perp)$, where $x_\perp := \sum_{j=2}^d x_j e_j  \in \mathbb R^{d-1}$, and $v_\perp := \sum_{j=2}^d v_j e_j  \in \mathbb R^{d-1}$. It is clear that $x^\top u = \|u\|x_1$, and $x^\top v = v_1 x_1 + g$, where  $g=x_\perp^\top v_\perp$. Furthermore, $x_1$ and $g$ are independent with distributions $\mathcal N(0,1)$ and $\mathcal N(0,\|v_\perp\|^2)$ respectively. It follows that
    \begin{align*}
        \Sigma^{-1/2}\mu &= \mathbb E\,[sign(x^\top u)x] = \mathbb E\,[sign(\|u\|x_1)x_1]e_1 = \mathbb E\,[|x_1|]e_1\\
        &= \sqrt{\frac{2}{\pi}}e_1 = \sqrt{\frac{2}{\pi}}\frac{u}{\|u\|} = \sqrt{\frac{2}{\pi}}\frac{\Sigma^{1/2}w_0}{\|w_0\|_\Sigma},
    \end{align*}
    from which we deduce the prescribed formula for the vector $\mu$. This proves the first part of the claim.

We now establish the formula $c=\beta_1 u + \beta_2 v$. The proof for the formula for $\mu$ follows a similar (but simpler) path.

Observe that by instead considering $\Sigma^{-1/2}c$, we reduce the problem to the isotropic case $x \sim \mathcal N(0,I_d)$. We can explicitly write
\begin{align}
    u = \frac{\bar w_s}{\|\bar w_s\|},\quad v = \frac{\Pi^\perp \bar w_0}{\|\Pi^\perp\bar w_0\|},\quad \rho = \frac{\bar w_s^\top \bar w_0}{\|\bar w_s\|\|\bar w_0\|},
\end{align}
where $\Pi=uu^\top$ and $\Pi^\perp=I_d-\Pi$. One can decompose $x=G_1u + G_2v + G_\perp$ and $\bar w_0 = c_1u + c_2v + c_\perp$
\begin{align}
    G_1 &:= x^\top u, \quad G_2 := x^\top v,\quad G_\perp := P^\perp x,\\
    c_1 &:= w_0^\top u,\quad c_2:= x^\top v,\quad c_\perp :=\quad P^\perp \Sigma^{1/2}w_0,
\end{align}
where $P$ is the projector onto the span of $u$ and $v$.
Note that $G_1$, $G_2$, and $G_\perp$ forms a set of independent random variables. Moreover, $G_1$ and $G_2$ have distribution $\mathcal N(0,1)$, while $G_\perp$ has distribution $\mathcal N(0,I_{d-2})$. We obtain
\begin{align}
    \mathbb E[q(x^\top w_s)sign(x^\top w_0)x] &= \mathbb E\,[q(x^\top w_s)sign(x^\top w_0)x] = \mathbb E\,[q(x^\top w_s)sign(x^\top w_0)x]\\
    &= \mathbb E\,[q(\|w_s\|G_1)sign(c_1 G_1 + c_2G_2)G_1]\cdot u\\
    &\quad + \mathbb E\,[q(\|w_s\|G_1)sign(c_1 G_1 + c_2G_2)G_2]\cdot v\\
    &\quad + \mathbb E\,[q(\|w_s\|G_1)sign(c_1 G_1 + c_2G_2)G_\perp].
\end{align}
Now, due independence, the third term decomposes as
$$
\mathbb E\,[q(\|w_s\|_\Sigma \cdot G_1)sign(c_1 G_1 + c_2G_2)]\cdot \mathbb E\,[G_\perp]=0.
$$
We deduce that
\begin{align*}
\mathbb E[q(x^\top w_s)sign(x^\top w_0)x] &= \beta_1 u + \beta_2 v,\\
\end{align*}
where $\beta_1$ and $\beta_2$ are as specified in the lemma
and we have used the fact that
$$
c_1/\|\bar w_0\| = \rho,\quad c_2/\|\bar w_0\| = \sqrt{1-\rho^2}.
$$

In particular, if $\rho = \pm 1$ (meaning that $w_0$ and $w_s$ are parallel), then 
\begin{align}
    \beta_k &= \mathbb E\left[sign(\pm G_1)q(\|\bar w_s\|\cdot G_1)G_k\right] = \begin{cases}
        \pm \beta,&\mbox{ if }k=1,\\
        0,&\mbox{ otherwise.}
    \end{cases}
\end{align}

We now compute the coefficients $\beta_1$ and $\beta_2$. Observe that thanks to Lemma \ref{lm:angel}, one has
\begin{align*}
\mathbb E[sign(G_3) \mid G_1]
&=  \mathbb E[sign(\rho G_1 + \sqrt{1-\rho^2}G_2)\mid G_1] = 2\Phi\left(\tau G_1\right)-1,\\
\mathbb E[sign(G_3)G_2) \mid G_1] &=  \mathbb E[sign(\rho G_1 + \sqrt{1-\rho^2}G_2)G_2 \mid G_1] = 2\varphi(\tau G_1).
\end{align*}
Therefore, with $r := \|\bar w_s\|$, we have
\begin{align*}
    \beta_1 &:= \mathbb E[q(rG_1) sign(G_3) G_1] = 2\mathbb E\left[q(rG_1)\Phi\left(\tau G_1\right)G_1\right] - \mathbb E\left[q(rG_1)G_1 \right]=2\mathbb E\left[q(rG_1)\Phi\left(\tau G_1\right)G_1\right],\\
    \beta_2 &:= \mathbb E[q(rG_1) sign(G_3) G_2] = 2\mathbb E\left[q(rG_1)\varphi(\tau G_1)\right],
\end{align*}
where we have used the oddness of the function $t \mapsto tq(rt)$ in the last equation on the first line. \qed

\begin{lemma}
\label{lm:angel}
    Let $G \sim \mathcal N(0,1)$, and let $a,b \in \mathbb R$ with $a \ne 0$. Then,
    \begin{align}
        \mathbb E[sign(aG+b)] &= 2\Phi(b/|a|)-1,\quad 
        \mathbb E[sign(aG+b)G] = 2\varphi(b/a).
    \end{align}
Furthermore, it holds that
\begin{align}
    \lim_{a \to 0} \mathbb E[sign(aG+b)] &= sign(b),\quad \lim_{a \to 0} \mathbb E[sign(aG+b)G] = 0. 
\end{align}
\end{lemma}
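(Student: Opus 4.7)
The plan is to verify each identity by direct integration, handling the sign of $a$ carefully, and then obtain the limits by elementary asymptotic analysis. These are standard Gaussian identities, so there is no deep obstacle; the main thing to get right is the case split on $\operatorname{sign}(a)$.

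For the first identity, I would start from $\operatorname{sign}(aG+b) = 2\,\mathbf{1}[aG+b>0] - 1$, so that $\mathbb E[\operatorname{sign}(aG+b)] = 2\,\mathbb P(aG+b>0) - 1$. When $a>0$, the event $\{aG+b>0\}$ is $\{G>-b/a\}$, which has probability $1-\Phi(-b/a) = \Phi(b/a) = \Phi(b/|a|)$. When $a<0$, the inequality flips and the event becomes $\{G<-b/a\}$, which has probability $\Phi(-b/a) = \Phi(b/|a|)$ (note $-b/a = b/|a|$ for $a<0$). In both cases one arrives at $2\Phi(b/|a|)-1$.

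For the second identity, the key ingredient is the antiderivative $\int g\varphi(g)\,dg = -\varphi(g)$, which follows immediately from $\varphi'(g)=-g\varphi(g)$. Writing $\mathbb E[\operatorname{sign}(aG+b)G] = \int_{-\infty}^{\infty} \operatorname{sign}(ag+b)\,g\,\varphi(g)\,dg$ and splitting the domain at $g=-b/a$, assuming $a>0$ one gets
\[
\int_{-b/a}^{\infty} g\varphi(g)\,dg - \int_{-\infty}^{-b/a} g\varphi(g)\,dg = \varphi(-b/a) + \varphi(-b/a) = 2\varphi(b/a),
\]
using the evenness of $\varphi$. The case $a<0$ is analogous and yields $-2\varphi(b/a)$; since this lemma is invoked only with $a=\sqrt{1-\rho^2}>0$ in the proof of Lemma \ref{lm:means}, the stated form $2\varphi(b/a)$ is exactly what is needed downstream.

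For the limiting statements as $a\to 0$, I would use that $b/|a| \to +\infty$ if $b>0$ and $-\infty$ if $b<0$, so that $\Phi(b/|a|) \to \mathbf{1}[b>0]$ and hence $2\Phi(b/|a|)-1 \to \operatorname{sign}(b)$. Similarly $|b/a|\to\infty$, so the Gaussian density $\varphi(b/a)$ vanishes, giving $2\varphi(b/a)\to 0$. The only subtlety worth flagging is the sign bookkeeping in the second identity under $a<0$; everything else is routine.
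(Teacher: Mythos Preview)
Your approach to the first identity is essentially the same as the paper's: write $\operatorname{sign}(aG+b)$ in terms of the event $\{aG+b>0\}$, then case-split on the sign of $a$ and simplify to $2\Phi(b/|a|)-1$. In fact the paper's proof stops there and does not treat the second identity or the limits at all, so your proposal is more complete than the original.

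Your treatment of the second identity via the antiderivative $\int g\varphi(g)\,dg=-\varphi(g)$ is correct, and you are right to flag the sign discrepancy for $a<0$: the stated formula $2\varphi(b/a)$ holds only for $a>0$ (for $a<0$ one gets $-2\varphi(b/a)$, as your computation shows and as the example $a=-1$, $b=0$ confirms). Your observation that the lemma is only invoked with $a=\sqrt{1-\rho^2}>0$ in the proof of Lemma~\ref{lm:means} is exactly the right way to resolve this. The limit arguments are routine and correct as written.
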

\begin{proof}
    Indeed, one computes
    \begin{align*}
    \mathbb E[sign(aG+b)] &= \mathbb P(aG+b>0)-\mathbb P(aG+b<0)=2\mathbb P(aG>-b)-1\\
    &= \begin{cases}
        2\mathbb P(G>-b/a) - 1 = 2\Phi(b/a)-1,&\mbox{ if }a>0,\\
        2\mathbb P(G<-b/a)-1=2\Phi(-b/a)-1,&\mbox{ if }a<0.
    \end{cases}
    \end{align*}
    We deduce that $\mathbb E[sign(aG+b)]=2\Phi(b/|a|)-1$ as claimed.
\end{proof}

\section{Proof of Lemma \ref{lm:KH} and Lemma \ref{lm:KE}}
\textbf {"Keep Hard" Examples (Lemma \ref{lm:KH}).}
Let $b=\tau$, $t=\sqrt{1+b^2}=\sqrt{1+\tau^2}=1/\sqrt{1-\rho^2}$. Using Lemma \ref{lm:means} and standard formulae\footnote{For example, see Wikipedia \url{https://en.wikipedia.org/wiki/List_of_integrals_of_Gaussian_functions}.} for the anti-derivative of the function $z \mapsto z\varphi(bz)\varphi(z)$
\begin{align*}
\beta = \beta_2 &= 2\mathbb E\left[q(rG)\varphi(\tau G)\right] = 2\int_{-\alpha}^\alpha \varphi(\tau z)\varphi(z)\mathrm dz = \frac{2}{t}\varphi(0)\Phi(tz)\bigg]_{z=-\alpha}^\alpha\\
&= 2\sqrt{1-\rho^2} \varphi(0)\left(2\Phi(\alpha/\sqrt{1-\rho^2})-1\right) =2\varphi(0)\sqrt{1-\rho^2}\epsilon_2.
\end{align*}
On the other hand, we have $\tilde\beta = \beta_1 = 2\mathbb E\left[q(rG)\Phi (\tau G)G\right] = 2\int_{-\alpha}^\alpha z\Phi(\tau z)\varphi(z)\mathrm d z$ with
\begin{align*}
\int_{-\alpha}^\alpha z\Phi(\tau z)\varphi(z)\mathrm d z % = \int_{-\alpha}^\alpha z\Phi(tz)\varphi(z)\mathrm d z
&= (b/t)\varphi(0)\Phi(tz) -\varphi(z)\Phi(bz)\bigg]_{z=-\alpha}^\alpha \\
&=(b/t)\varphi(0)(2\Phi(t\alpha)-1) - \varphi(\alpha)(2\Phi(b\alpha)-1)\\
&= \rho\varphi(0)(2\Phi(\alpha/\sqrt{1-\rho^2})-1)-\varphi(\alpha)(2\Phi(\tau \alpha)-1)\\
&= \rho\varphi(0)\epsilon_1-\varphi(\alpha)\epsilon_2,
\end{align*}
which proves Lemma \ref{lm:KH}

\textbf{"Keep Easy" Examples (Lemma \ref{lm:KE}).}
Indeed, since $q_{KE} = 1-q_{KH}$, we know from the previous lemma (KH strategy) that
\begin{align*}
    \tilde \beta(q_{KE}) &= 2\mathbb E\,[q_{KE}(rG)\Phi(\tau G)G] = 2\mathbb E\,[\Phi(\tau G)G]-2\mathbb E\,[q_{KH}(rG)\Phi(\tau G)G]\\
    &= 2\mathbb E\,[\Phi(\tau G)G]-2\tilde \beta(q_{KH}) = 2(\rho\varphi(0)-\varphi(\alpha))-\tilde \beta(q_{KH})\\
    &= 2\rho\varphi(0)-2\rho \varphi(0)\epsilon_1(q_{KH})+2\varphi(\alpha)\epsilon_2(q_{KH})\\
    &= 2(\rho\varphi(0)(1-\epsilon_1(q_{KH}))+\varphi(\alpha)\epsilon_2(q_{KH}))\\
    &= 2(\rho\varphi(0)\epsilon_1 + \varphi(\alpha)\epsilon_2).
\end{align*}

The computation of $\beta_2(q_{KE})$ uses a completely analogous idea:
\begin{align*}
    \beta(q_{KE}) &= 2\mathbb E[q_{KE}(rG)\varphi(\tau G)]=2\mathbb E[\varphi(\tau G)]-2\mathbb E[q_{KH}(rG)\varphi(\tau G)]\\
    &= 2\varphi(0)\sqrt{1-\rho^2} - 2\beta(q_{KH})\\
    &= 2\left(\varphi(0)\sqrt{1-\rho^2}-\varphi(0)\sqrt{1-\rho^2}\epsilon_1(q_{KH})\right)\\
    &= 2\varphi(0)\sqrt{1-\rho^2}\left(1-\epsilon_1(q_{KH})\right)\\
    &= 2\varphi(0)\sqrt{1-\rho^2}\epsilon_1(q_{KE})
\end{align*}
This proves Lemma \ref{lm:KE}. \qed

\subsection{Proof of Proposition \ref{prop:deterministic-equivalents}}
Using Theorem 4 of Liao and Mahoney's "Hessian Eigenspectra of More Realistic Nonlinear Models"
 \url{https://arxiv.org/abs/2103.01519} and some basic manipulations, we can write
\begin{align}
R &\simeq \bar R,\\
\text{where }\bar R^{-1} &= \mathbb E_x\,\left[\frac{q}{1+q\delta(z)}(\Sigma^{1/2}\Pi_\perp\Sigma^{1/2} + \alpha\alpha^\top)\right]-zI_d,
\end{align}
where $q:=q(x^\top w_s)$ for $x \sim \mathcal N(0,\Sigma)$, $\alpha:=\Sigma^{1/2}\Pi x$. Since $q$ is Bernoulli with mean $p := \mathbb P(q=1)$, it is clear that
$$
\mathbb E_x\,\left[\frac{q}{1+q\delta(z)}\right] = \frac{p}{1+\delta(z)}:=t(z).
$$
This further gives
\begin{eqnarray}
    \begin{split}
\bar R^{-1} &= t(z)\Sigma^{1/2}\Pi_\perp \Sigma^{1/2}-zI_d+\Sigma^{1/2} \Pi K \Pi \Sigma^{1/2},\\
\text{with }K &:= \mathbb E_u\,\left[\frac{q(u^\top v)}{1+q(u^\top v)\delta(z)} uu^\top\right],
\end{split}
\label{eq:magical}
\end{eqnarray}
where $u := \Sigma^{-1/2}x \sim \mathcal N(0,I_d)$ and $v := \Sigma^{1/2}w_s$.

Now, to determine the matrix $K$, we first rewrite $u=(u_\parallel ,u_\perp)$ and $v=(v_1 ,v_\perp)$, where
\begin{align}
u_\parallel  &:= \frac{u^\top  w_s}{\|w_s\|} \in \mathbb R,\quad v_1  := \frac{v^\top w_s}{\|w_s\|} \in \mathbb R,\\
u_\perp &:= \Pi_\perp u \in \mathbb R^{d-1},\quad v_\perp := \Pi_\perp v \in \mathbb R^{d-1}.
\end{align}

The advantage of this representation is that
\begin{itemize}
\item  $u_\perp$ and $v_\perp$ are orthogonal to $w_s$.
\item $u_\parallel $ and $u_\perp$ are  statistically independent.
\item $u_\parallel $ has distribution $\mathcal N(0,1)$.
\item $u_\perp$ has distribution $\mathcal N(0,I_{d-1})$.
\end{itemize}
By symmetry of the situation, we know that
\begin{align*}
K&= s(z)  \Pi + s_\perp(z) \Pi_\perp,\\
\text{where }
s(z)  &:= \mathbb E [h(w^\top g)G_1 ^2],\quad
s_\perp(z) := \mathbb E [h(w^\top g) G_\perp^2]\\
w &:= (v_1 ,\|v_\perp\|) \in \mathbb R^2,\quad g := (G_1 ,G_\perp) \sim \mathcal N(0,I_2),\\
h(q) &:= \frac{q}{1+q\delta(z)}.
\end{align*}
Combining with \eqref{eq:magical}, we get
\begin{align}
\bar R^{-1}
&= \Sigma^{1/2}(a(z)I_d+b(z)\Pi)\Sigma^{1/2},\\
\text{where }
a(z) &= t(z) - z,\quad t(z) = \frac{p}{1+\delta(z)},\quad b(z) = s(z)-t(z).
\end{align}
Now, using the \emph{Matrix-Inversion Lemma}, one can obtain $\bar R$ from $\bar R^{-1}$ as follows:
$$
\Sigma^{1/2}\bar R\Sigma^{1/2}=(a(z)I_d+b(z)\Pi)^{-1} = \frac{1}{a(z)}\left(I_d - \frac{b(z)/a(z)}{b(z)/a(z)+1}\Pi\right) = \frac{1}{a(z)}\Pi_\perp + \frac{1}{b(z)+a(z)}\Pi.
$$
It suffices to notice that $1/(b(z)+a(z))=1/(s(z)-z) =\tilde m(z)$ and $1/a(z) = \bar m(z)$ by definition, and the result follows.

\subsection{Proof of Theorem \ref{thm:main}}
Set $z=-\lambda$. Recall from Lemma \ref{lm:means} that $\mu = \sqrt{2/\pi}w_0/\|w_0\|$ and $c = \beta_1 u + \beta_2 v$. In Theorem \ref{thm:main}, we have the identification $\beta=\beta_2$ and $\tilde \beta=\beta_1$. We know that $R \simeq \bar R = m(z)\Pi^\perp + \tilde m(z)\Pi$, where $\Pi=uu^\top$. One computes
\begin{align*}
       m_0 \simeq \mu^\top \bar R c &= \sqrt{\frac{2}{\pi}}\frac{1}{\|w_0\|}w_0^\top\left(m(z)\Pi^\perp + \tilde m(z) \Pi \right)(\beta_1 u + \beta_2 v),\\
       &= \sqrt{\frac{2}{\pi}}\frac{1}{\|w_0\|}w_0^\top\left(\beta_1 \tilde m(z)u + \beta_2m(z)v\right),\\
        \text{with }\frac{w_0^\top u}{\|w_0\|} &=\rho,\quad
        \frac{w_0^\top v}{\|w_0\|} = \frac{w_0^\top w_0/\|w_0\|-\rho\|w_0\|(u^\top w_0/\|w_0\|)}{\|w_0\|\sqrt{1-\rho^2}}\\
        &= \frac{\rho-\rho^2}{\sqrt{1-\rho^2}} =\sqrt{1-\rho^2} =:\omega/\beta_2,
\end{align*}
Putting things together gives $m_0 \simeq \sqrt{2/\pi}\cdot \left(\omega m(z) + \tilde\omega \tilde m(z)\right)$ as claimed.

Likewise, one computes
\begin{align*}
    \frac{1}{n}\trace R^2 &\simeq \frac{1}{n}\trace \left(m'(z)\Pi^\perp + \tilde m'(z)\Pi\right) \simeq \phi m'(z),\\
    c^\top \bar R c &= c^\top \left(m(z)\Pi^\perp + \tilde m(z)\Pi\right)c=(\beta_1 u + \beta_2v)^\top (\tilde m(z)\Pi + m(z)\Pi^\perp)(\beta_1 u + \beta_2 v)\\
    &= \beta_2^2 m(z) + \beta_1^2 \tilde m(z) = \beta^2 m(z) + \tilde \beta^2 \tilde m(z) =: r(z),\\
    c^\top \Sigma' c &= c^\top \mathbb E\,[R^2] c \simeq c^\top \left(m'(z)\Pi^\perp + \tilde m'(z)\Pi\right)c=\beta^2 m'(z) + \tilde \beta^2 \tilde m'(z)= r'(z),
\end{align*}
which the claimed formula for $\nu$ follows.\qed

\subsection{Proof of Corollary \ref{cor:ridgeless}}
As usual, set $z:=-\lambda<0$.

(A) For $\phi < p$, it is easy to see from formula \eqref{eq:meq-mp} and Lemma \ref{lm:primes} that in the limit $z \to 0$, one has
\begin{align*}
    m(z) &\to \frac{1}{p-\phi},\\
    \bar m(z) &\to 0,\\
    \tilde m(z) &\to \frac{p/\gamma}{p-\phi},\\
    m'(z) &\to \frac{p}{(p-\phi)^3},\\
    \bar m'(z) &\to \frac{1}{p-\phi},\\
    \tilde m'(z) &\to \frac{p/\gamma^2}{(p-\phi)^3}\left(p(p-\phi)+\phi\gamma\right)=\frac{p}{(p-\phi)^3}\left((p-\phi)p/\gamma^2+\phi/\gamma\right),\\
    \frac{ m'(z)}{1+\phi m(z)} &\to \frac{1}{(p-\phi)^2}.
\end{align*}
Furthermore, with $m_0$ and $\nu_0$  as defined in Theorem \ref{thm:main}, one computes
\begin{align*}
    r(z) &= \beta^2 m(z) + \tilde\beta^2 \tilde m(z) \to \beta^2\frac{1}{p-\phi} + \tilde \beta^2\frac{p/\gamma}{p-\phi}=\frac{r_0}{p-\phi},\\
    r'(z) &= \beta^2 m'(z) + \tilde \beta^2 \tilde m'(z) \to \beta^2\cdot \frac{p}{(p-\phi)^3} + \tilde \beta^2 \cdot\frac{p/\gamma^2}{(p-\phi)^3}(p(p-\phi)+\phi\gamma)=\frac{r'_0}{(p-\phi)^3},
\end{align*}
where $r_0$ and $r'_0$ are as defined in the claim. We deduce that $m_0/\sqrt{\nu_0-m_0^2} = a/\sqrt{b-a^2}$ and the result follows from Theorem \ref{thm:main}.

(B) Now consider the case $\phi>p$. Observe that $m_0 = \sqrt{\nu_0-m_0^2}=-zm_0/\sqrt{z^2-z^2m_0^2}$. On the other hand, from \eqref{eq:meq-mp} we know that
\begin{eqnarray}
    -zm(z) = \frac{\sqrt{(p-\phi-z)^2-4\phi z}-(p-\phi-z)}{2\phi}
\end{eqnarray}
Combining with Lemma \ref{lm:primes}, we deduce the following limits
\begin{align*}
-zm(z),z^2m'(z) &\to c_0:=1-p/\phi>0,\\
\bar m'(z) &\to \frac{p/\phi}{\phi-p},\\
-z\tilde m(z), z^2\tilde m'(z) &\to \frac{c_0}{\gamma/\phi+c_0},\\
\frac{-z m'(z)}{1+\phi m(z)} &\to \frac{1}{\phi}.
\end{align*}
Furthermore, one computes
\begin{align*}
-zr(z) &= \beta_2^2\cdot (-z m(z)) + \beta_1^2\cdot (-z \tilde m(z)) = \beta_2^2c_0 + \beta_1^2 \frac{c_0}{\gamma/\phi+c_0}=:c_0r_0,\\
z^2 r'(z) &= \beta_2^2 z^2m'(z) + \beta_1^2 z^2\tilde m(z) = \beta_2^2 c_0 + \beta_1^2 \frac{c_0}{\gamma/\phi+c_0}=c_0r_0,\\
-zm_0 &= \sqrt{2/\pi}\cdot (-zm(z)\omega-z\tilde m(z)\tilde \omega) \to \sqrt{2/\pi}c_0\cdot(\omega+\tilde\omega/(\gamma/\phi+c_0)):=a,\\
    z^2\nu_0 &= p\phi z^2 m'(z) + z^2r'(z)-2\phi\frac{-zm'(z)}{1+\phi m(z)}\cdot (-z r(z)) \\
    &\to p\phi c_0 + r_0c_0-2r_0c_0=c_0\cdot( p\phi - r_0)=:b.
\end{align*}
 We deduce that
$$
m_0/\sqrt{\nu_0-m_0^2}=-zm_0/\sqrt{z^2\nu_0-z^2m_0^2} = a/\sqrt{b-a^2},
$$
and the result follows from Theorem \ref{thm:main}.
\qed

\begin{lemma}
\label{lm:primes}
    We have the following identities:
    \begin{align*}
        m'(z) &= 
        \frac{m(z)^2}{1-(1+\bar m(z))^2\phi/p},\\
        \bar m'(z) &= \frac{p}{(z+\phi \bar m(z))^2/\bar m(z)^2-p\phi} = \frac{p}{(\phi+1/m(z))^2-p\phi}, \\
        \tilde m'(z) &= \tilde m(z)^2\left(\frac{\gamma\phi m'(z)}{(1+\phi m(z))^2} + 1\right),\\
        r'(z) &= \beta ^2 m'(z) + \tilde \beta ^2 \tilde m'(z).
    \end{align*}
\end{lemma}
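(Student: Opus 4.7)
The four identities split into two groups by difficulty. Identities (iii) and (iv) are direct calculus, while (i) and (ii) require implicit differentiation of the fixed-point equations from Lemma \ref{lm:mp} and Proposition \ref{prop:deterministic-equivalents}.

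Identity (iv) is immediate: since $\beta$ and $\tilde\beta$ depend only on the pruning strategy and signal alignment and not on $z$, the relation $r(z) = \beta^{2} m(z) + \tilde\beta^{2}\tilde m(z)$ differentiates termwise. For (iii), I plan to apply the chain rule to the closed form $\tilde m(z) = 1/(s(z)-z)$ with $s(z) = \gamma/(1+\phi m(z))$: differentiation gives $\tilde m'(z) = \tilde m(z)^{2}(1 - s'(z))$, and a second application of the chain rule yields $s'(z) = -\gamma\phi m'(z)/(1+\phi m(z))^{2}$, which rearranges directly into the stated expression.

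For (ii), the plan is to exploit $\bar m(z) = 1/(t(z)-z)$ together with the quadratic $t(z)^{2} - (p - \phi + z)\,t(z) + pz = 0$ satisfied by $t(z)$ in the isotropic regime (established inside the proof of Lemma \ref{lm:mp} by reducing \eqref{eq:tz}). Implicit differentiation of the quadratic yields an expression for $t'(z)$; combining with $\bar m'(z) = \bar m(z)^{2}(1 - t'(z))$ and substituting $t(z) = z + 1/\bar m(z) = z + 1/m(z)$ should produce the two stated forms. For (i), I would differentiate the equivalent implicit equation $(p-\phi-z)\,m(z) = 1 + \phi z\, m(z)^{2}$ with respect to $z$, solve the resulting linear equation for $m'(z)$, and then use the factorization
\[
p\,\bar m(z) = (1 + \phi\bar m(z))(1 + z\bar m(z)),
\]
which follows directly from the fixed-point equation for $t(z)$, to trade the explicit $z$ in the denominator for the expression $1 - (1+\bar m(z))^{2}\phi/p$.

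The main obstacle will be the final bookkeeping in (i): naive implicit differentiation delivers $m'(z)$ as a rational function of $m(z)$ and $z$, whereas the claimed form is purely in $m$, $\bar m$, $\phi$, and $p$. Eliminating the $z$-dependence requires repeated application of the factorization $p\bar m = (1+\phi\bar m)(1+z\bar m)$ and a careful bookkeeping of which occurrences of $m$ collapse to $\bar m$ in the isotropic setting. Everything else is routine application of the chain rule, the quotient rule, and the defining relations $\bar m = 1/(t-z)$, $\tilde m = 1/(s-z)$.
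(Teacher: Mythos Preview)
The paper does not actually supply a proof of Lemma~\ref{lm:primes}; it is simply stated at the end of the theory appendix and then invoked in the proof of Corollary~\ref{cor:ridgeless}. So there is nothing to compare against, and on the method your plan is the right one: (iii) and (iv) are one-line chain-rule computations, and (i)--(ii) fall out of implicitly differentiating the quadratic $t^2-(p-\phi+z)t+pz=0$ together with $m=1/(t-z)$.

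There is, however, a notational pitfall that will bite you when you execute (i) and (ii). You have identified $\bar m(z)$ with $1/(t(z)-z)$, which in the isotropic case coincides with $m(z)$. But the $\bar m$ appearing in Lemma~\ref{lm:primes} is \emph{not} that object: the two displayed forms of (ii) force $z/\bar m(z)=1/m(z)$, i.e.\ $\bar m(z)=z\,m(z)$, and the limits recorded in the proof of Corollary~\ref{cor:ridgeless} (namely $\bar m(z)\to 0$ and $\bar m'(z)\to 1/(p-\phi)$ as $z\to 0^-$ for $\phi<p$) are consistent only with this reading. If you proceed with $\bar m=m$ you will, after eliminating $z$ via your (correct) factorization $pm=(1+\phi m)(1+zm)$, arrive at
\[
m'(z)=\frac{m(z)^2}{1-(1+z\,m(z))^{2}\phi/p},
\]
which matches the printed identity (i) only once you recognise $1+zm$ as the lemma's $1+\bar m$. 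The same reinterpretation is needed for the first form of (ii); a clean route there is to rewrite your factorization as the self-contained relation $z(1+\bar m)=\bar m\bigl(p-\phi(1+\bar m)\bigr)$ for $\bar m:=zm$, differentiate that directly, and then use $z+\phi\bar m=p\bar m/(1+\bar m)$ to reach the stated denominator. With this single correction to the meaning of $\bar m$, your plan goes through verbatim.
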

\section{Additional Experimental}
\subsection{Choice of Generative Model}\label{app:choice_of_gens}
We evaluate the capabilities of four open-source large-scale pre-trained text-to-image models~\citep{rombach2022high} in a controlled setup to determine which one performs best for the image-classification task. Each synthetic image is generated with a simple prompt (\texttt{class name}). We create a dataset of size 130,000 examples and train a ViT-B model on the synthetic data. Our results (Table~\ref{tab:choice_of_gens}) show that LDM-1.5 outperforms its more recent counterparts, LDM-XL and LDM-2.1, despite being an older model. We hypothesize that this is due to the lower diversity of generations in more recent models. This finding is consistent with previous work~\citep{astolfi2024consistency}, which observed lower diversity in more recent latent diffusion models. For all of our experiments, we use LDM-1.5 as it is the best performing model.

\begin{table}[ht]
\vspace{-1em}
\centering
\caption{Study on the choice of generative model for the task of ImageNet-100 classification with synthetic data. All experiments are trained for 50k iterations and the dataset size is a static size of 130k.}\label{tab:choice_of_gens}
\vspace{0.4cm}
{\small
\begin{tabular}{lc}
\toprule
 Syn. Data Source&Real Val. Acc.\\ 
\midrule
\midrule
 LDM-1.4& 59.06  \\
 LDM-1.5&\textbf{59.24}  \\
 LDM-2.1& 55.92\\
 LDM-XL &52.8 \\
\bottomrule
\end{tabular}
}
\vspace{-1em}
\end{table}

\subsection{Ablations}\label{app:ablations}
\paragraph{$\omega=0$ vs\ \ $\omega>0$}
To understand the effect of different components of our framework, we ablate the case where data is generated through the DP framework, but with a coefficient of zero for the term $\omega$. We also report results using different values of $\omega$. See results in Table~\ref{tab:in100_ab} comparing row 1 with rows 2, 3 and 4.

\paragraph{Incremental patience}
In our experiments, setting the maximum patience value ($T_{\text{max}}$) to a fixed number resulted in the model requesting too much data when the size of the dataset was grown too big. For example, with a fixed patience of $T_\textrm{max}=7$, for an experiment with initial dataset of size 130k samples, monitoring the validation accuracy every 130k iterations, meant that in the beginning every example was seen on average 7 times before the patience reached $T_{\text{max}}$. But as we generate more examples throughout the training, with a fixed patience value, each example would not be able to be seen even at least once. When the dataset grows to be 1.3 million, each example is seen on average of 0.7 times. This resulted in the model hitting the maximum patience very often. As a result, we incrementally increase the maximum patience value as the dataset increases in size. See Table~\ref{tab:in100_ab} for the result that compares the two scenarios (comparing rows 1 and row 5). Note we found using an incremental patience to be significantly easier to tune. We often start with a patience of 1 and continue training. However, fixed patience requires more tuning depending on the size of the dataset and the number of training iterations.

\begin{table}[h]
\centering
\caption{Ablation study on ImageNet-100. Given a baseline method with DP, we modify each component of the framework one-by-one and study the effect of each change. All the experiments are trained for 50k iterations and have the same final size.}\label{tab:in100_ab}
\vspace{0.4cm}
{\small
\begin{tabular}{llllll}
\toprule
\# & $\omega$ & $T_{max}$ & Sampling & Real Val. Acc. & Real tr. Acc. \\ 
\midrule
\midrule
1   & 0.05     & inc. & uniform & 68.04  &  69.25           \\
2                         & \textbf{0}        & inc. & uniform & 61.58 &  63.09            \\
3                        & \textbf{0.03}     & inc.  & uniform & 66.70  &  68.00           \\
4                       & \textbf{0.07}     & inc.  & uniform & 66.88 &    68.66            \\
5    & 0.05     & \textbf{fixed}   & uniform & 67.22 & 70.38 \\
6           & 0.05     & inc. & \textbf{non-uni. }& 68.01 &  69.11  \\
\bottomrule
\end{tabular}
}
\end{table}

\paragraph{Dataset sampling probabilities}
One can assume that newly generated examples could be more valuable then previously generated examples. As a result, we experiment the case where every newly generated example has twice as much probability to be selected when sampling the data batch for a given iteration. We observe that having higher probability does not lead to statistically significant improvements. See results in Table~\ref{tab:in100_ab} comparing rows 1 and 7.

\subsection{Intermediate stages of reverse sampling}
In section~\ref{sec:2} we mentioned that DDIM's $x_0$ approximation is a good approximation to guide the sampling process. In this section we plot these intermediate examples which are fed to the classifier to compute the entropy of the sample and use it for guidance in the sampling process. Figure~\ref{fig:intermediate-sampling} shows that although intermediate samples are noisy, they contain the key features. 
\begin{figure}
    \centering
    \includegraphics[width=0.8\linewidth]{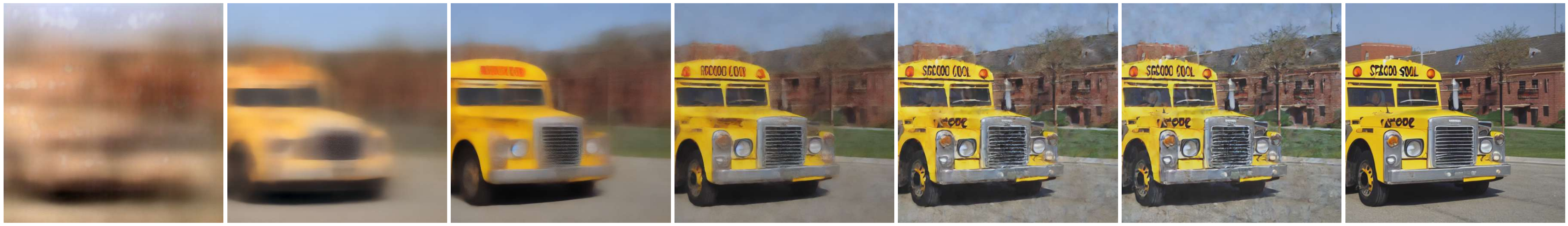}
    \caption{\textbf{Intermediate stages of reverse sampling.} Samples of the $x_0$ approximation using the DDIM sampler. While blurry, these intermediate samples provide sufficient gradients for entropy guidance, with key features like color and shape discernible even in early stages.
    }\label{fig:intermediate-sampling}
\end{figure}

\subsection{Studying the effect of $\omega$}
In this section we study the effect of dynamically generating the data without entropy guidance versus generating it uniformly from the beginning. In the first scenario, we use the DP framework, with monitoring the patience variable but using an $\omega=0$ which effectively generates with naive sampling. In the second case all the data is generated in advance and no data is added during training. As it can be see in Figure~\ref{fig:acc_dyn_vs_static}, there is close to no difference between generating all the data in advance or generating it dynamically if we allow for enough iterations for training. 

In this experiment, we evaluate on ImageNet-100 validation set and train all the models for 50,000 iterations. 

\begin{figure}
    \centering
    \includegraphics[width=0.5\linewidth]{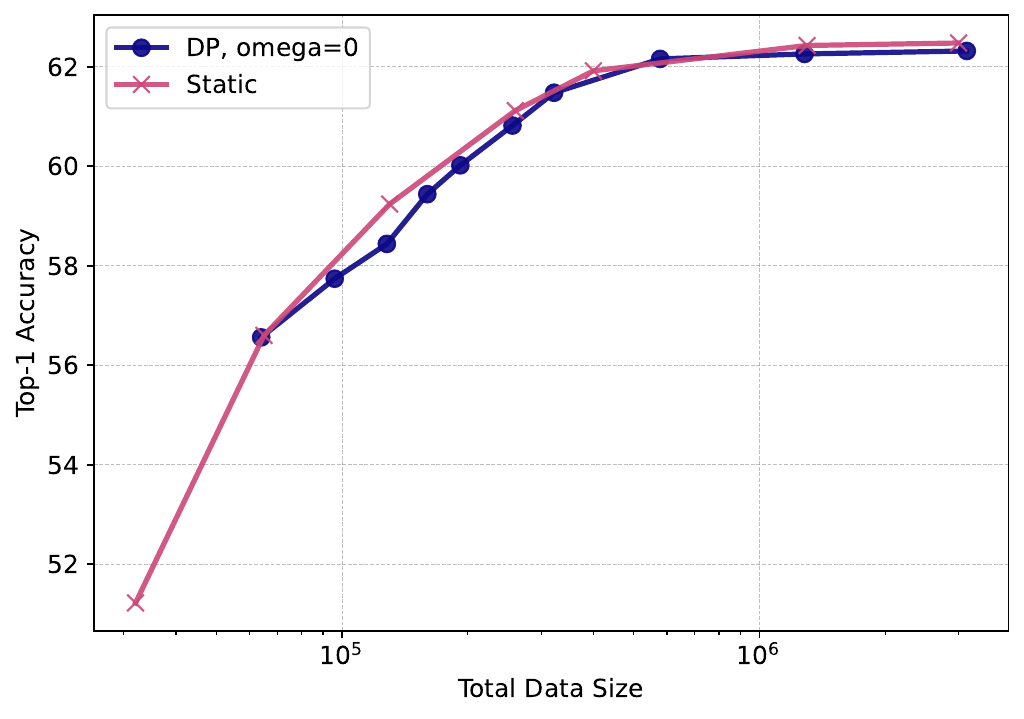}
    \caption{there is close to no difference between generating all the data in advance or generating it dynamically if we allow for enough iterations for training. Both cases have the same scaling behavior.}
    \label{fig:acc_dyn_vs_static}
\end{figure}

\subsection{Experimental Details}\label{app:exp_details}
\subsubsection{Scaling plots}\label{app:details_scaling}
We have used the Warmup-Stable-Decay (WSD) learning rate scheduler~\citep{hu2024minicpm}, which stabilizes the learning rate throughout most of the training, ensuring effective adaptation to newly generated data.
For ImageNet-100, we train on 4 nodes, each with 8 GPUs with a batchsize of 64. For ImageNet-1k, we train on 4 nodes, each with 8 GPUs with a batchsize of 128. For all the experiments, initial 10\% of the iterations is done with linear-warmup and the last 20\% of the iterations is for cool-down with Cosine Annealing. The intermediate steps are constant learning rate. For all these experiments we use $\lambda=3$ and $\omega=0.05$.

For ImageNet 100, the learning rate is $0.003$ with an EMA momentum of $0.001$. For ImageNet-1k, the learning rate is set to $0.0016$ with an EMA momentum of $0.001$. We also use label smoothing with a value of 0.11. We use Mixup with an alpha of 0.5 and CutMix with an alpha of 1.0. Furthermore, we use the AdamW optimizer. 

Furthermore, for each setup in our experiments, we apply branch-outs. A branch-out is the same experiment as an initial setup except that it does not allow additional data starting from a specific epoch. The epoch is selected based on the times that the $T_{max}$ was hit. Meaning a branch out is just before additional data is added to the training set.
\begin{table}[h]
\centering
\begin{tabular}{cccccccccc}
    \toprule
    \textbf{N} & \textbf{P} & \textbf{k} & \textbf{N + kP} & \textbf{$\omega$} & \textbf{Init. $T_{max}$} & \textbf{Branch out Epoch} & \textbf{IN Val.} & \textbf{IN-Sk} & \textbf{IN-R*} \\
    \midrule
    32000 & 16000 & 3& 80000 & 0.05 & 6 & 662 & 59.48 & 31.49 & 58.92 \\
    32000 & 16000 & 4& 96000 & 0.05 & 6 & 701 & 60.54 & 33.69 & 59.97 \\
    32000 & 16000 & 5& 112000 & 0.05 & 6 & 767 & 61.80 & 35.03 & 61.24 \\
    32000 & 16000 & 6& 128000 & 0.05 & 6 & 859 & 62.68 & 35.95 & 62.55 \\
    32000 & 16000 & 8& 160000 & 0.05 & 6 & 951 & 64.40 & 38.08 & 63.87 \\
    64000 & 32000 & 6& 256000 & 0.05 & 4 & 469 & 65.52 & 43.42 & 67.32 \\
    64000 & 32000 & 8& 320000 & 0.05 & 4 & 606 & 66.28 & 44.33 & 67.94 \\
    64000 & 32000 & 11& 416000 & 0.05 & 4 & 782 & 66.92 & 44.99 & 68.81 \\
    64000 & 32000 & 18& 640000 & 0.05 & 4 & 1001 & 67.80 & 45.25 & 68.46 \\
    130000 & 130000 & 6& 910000 & 0.05 & 14 & - & 68.28 & 45.06 & 70.87 \\
    130000 & 64000 & 27& 1794000 & 0.05 & 5 & 494 & 68.46 & 46.33 & 71.04 \\
    130000 & 64000 & 47& 3138000 & 0.05 & 5 & 618 & 68.88 & 45.76 & 71.26 \\
    64000 & 0 & - & 64000 & 0& inf & - & 56.56 & 27.86 & 52.97 \\
    130000 & 0 & - & 130000 & 0& inf & - & 59.44 & 33.32 & 55.95 \\
    260000 & 0 & - & 260000 & 0& inf & - & 60.02 & 33.79 & 56.74 \\
    400000 & 0 & - & 400000 & 0& inf & - & 61.92 & 36.03 & 59.75 \\
    2000000 & 0 & - & 2000000 & 0& inf & - & 62.16 & 34.97 & 60.15 \\
    4000000 & 0 & - & 4000000 & 0& inf & - & 62.32 & 36.43 & 60.89 \\
    \bottomrule
\end{tabular}
\caption{Details of the results reported in Figure~\ref{fig:scaling-laws} for the ImageNet-100 dataset. All the experiments are trained for 50k iterations. The variables are based on the notations defined in Algorithm~\ref{alg:framework}. Note that $T_{max}$ is incremental.}
\label{tab:your_table_label}
\end{table}

\begin{table}[h]
\centering
\begin{tabular}{cccccccccc}
    \toprule
    \textbf{N} & \textbf{P} & \textbf{k} & \textbf{N + kP} & \textbf{$\omega$} & \textbf{Init. $T_{max}$} & \textbf{Branch out Epoch} & \textbf{IN Val.} & \textbf{IN-Sk} & \textbf{IN-R} \\
    \midrule
    160000  & 160000  & 1  & 320000  & 0.05 & 1  & 134  & 42.572 & 39.363 & 20.987 \\
    320000  & 160000  & 1  & 480000  & 0.05 & 1  & 191  & 44.880 & 41.987 & 23.095 \\
    320000  & 320000  & 1  & 640000  & 0.05 & 1  & 71   & 47.910 & 46.887 & 27.568 \\
    654000  & 654000  & 1  & 1308000 & 0.05 & 1  & 124  & 50.226 & 49.867 & 29.843 \\
    654000  & 654000  & 2  & 1962000 & 0.05 & 1  & 156  & 50.670 & 51.027 & 29.944 \\
    1300000 & 650000  & 10 & 7800000 & 0.05 & 1  & 246  & 50.908 & 49.820 & 31.217 \\
    654000  & 654000  & 19 & 13080000 & 0.05 & 1  & -    & 51.198 & -      & 16.776 \\
    320000  & 0       & -  & 320000  & 0.0  & inf & -    & 39.334 & 32.653 & 18.495 \\
    654000  & 0       & -  & 654000  & 0.0  & inf & -    & 42.514 & 33.883 & 21.303 \\
    1300000 & 0       & -  & 1300000 & 0.0  & inf & -    & 44.116 & 37.337 & 23.653 \\
    2600000 & 0       & -  & 2600000 & 0.0  & inf & -    & 45.006 & 38.667 & 24.298 \\
    10000000 & 0      & -  & 10000000 & 0.0  & inf & -   & 45.614 & 40.050 & 24.762 \\
    13000000 & 0      & -  & 13000000 & 0.0  & inf & -   & 45.628 & 40.357 & - \\
    \bottomrule
\end{tabular}

\caption{Details of the results reported in Figure~\ref{fig:scaling-laws} for the ImageNet-1k dataset. All the experiments are trained for 100k iterations. The variables are based on the notations defined in Algorithm~\ref{alg:framework}. Note that $T_{max}$ is incremental.}
\label{tab:experiment_results}
\end{table}

\subsection{Visual examples}
Below we provide additional examples of generations throughout time with different $\omega$ coefficients (x-axis) of [0.0001, 0.1, 0.3, 0.5, 0.7]. All samples are generated with the same seed. As from top to bottom the epoch number increases.
\begin{figure}[h]
    \centering
    \begin{subfigure}{00.48\linewidth}
        \centering
        \includegraphics[width=\linewidth]{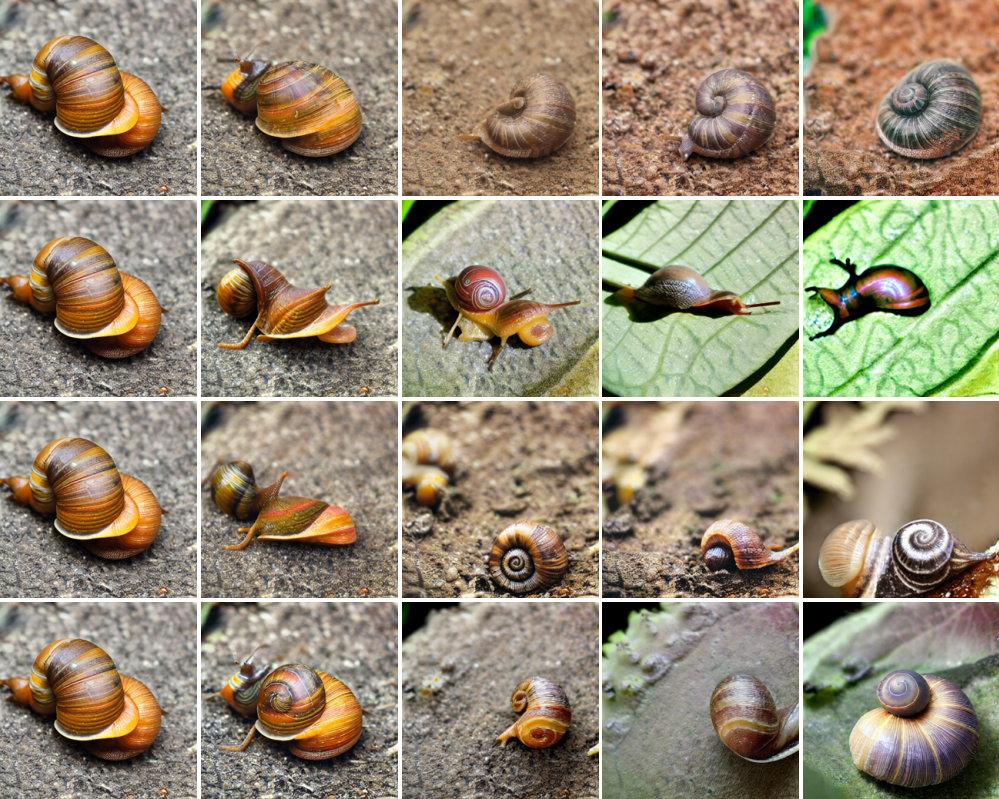}
        \caption{Class snail.}
        \label{fig:subfig1}
    \end{subfigure}
    \hfill
    \begin{subfigure}{00.48\linewidth}
        \centering
        \includegraphics[width=\linewidth]{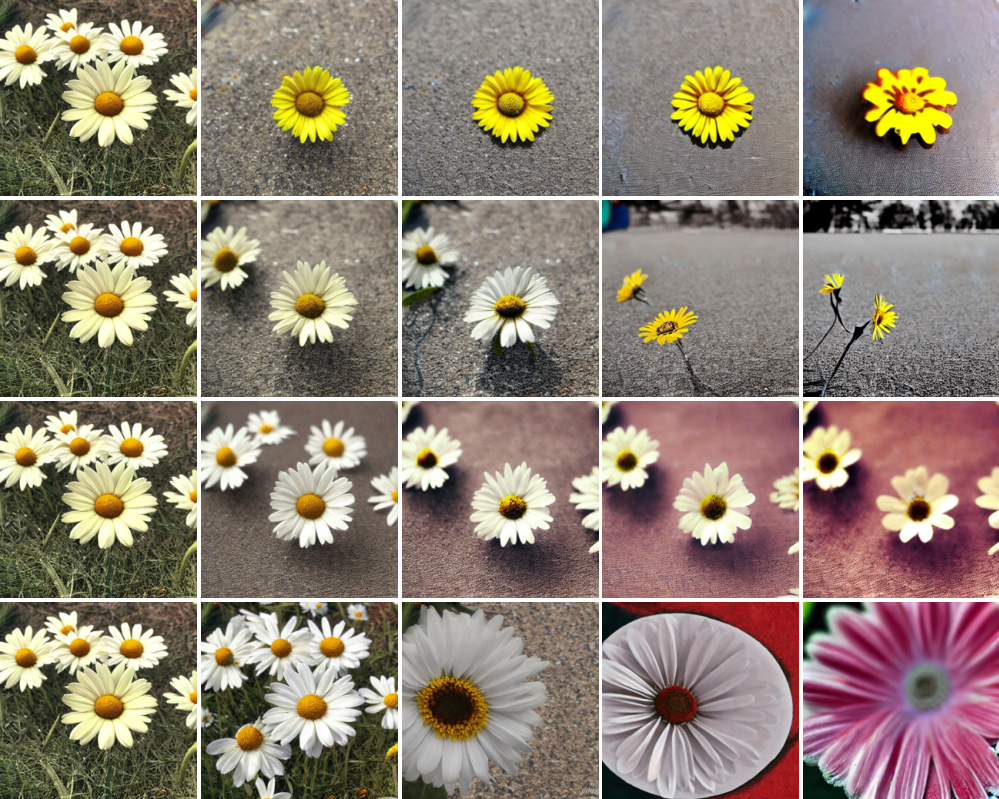}
        \caption{Class daisy.}
        \label{fig:subfig5}
    \end{subfigure}
    \vspace{0.5cm}

    \begin{subfigure}{00.48\linewidth}
        \centering
        \includegraphics[width=\linewidth]{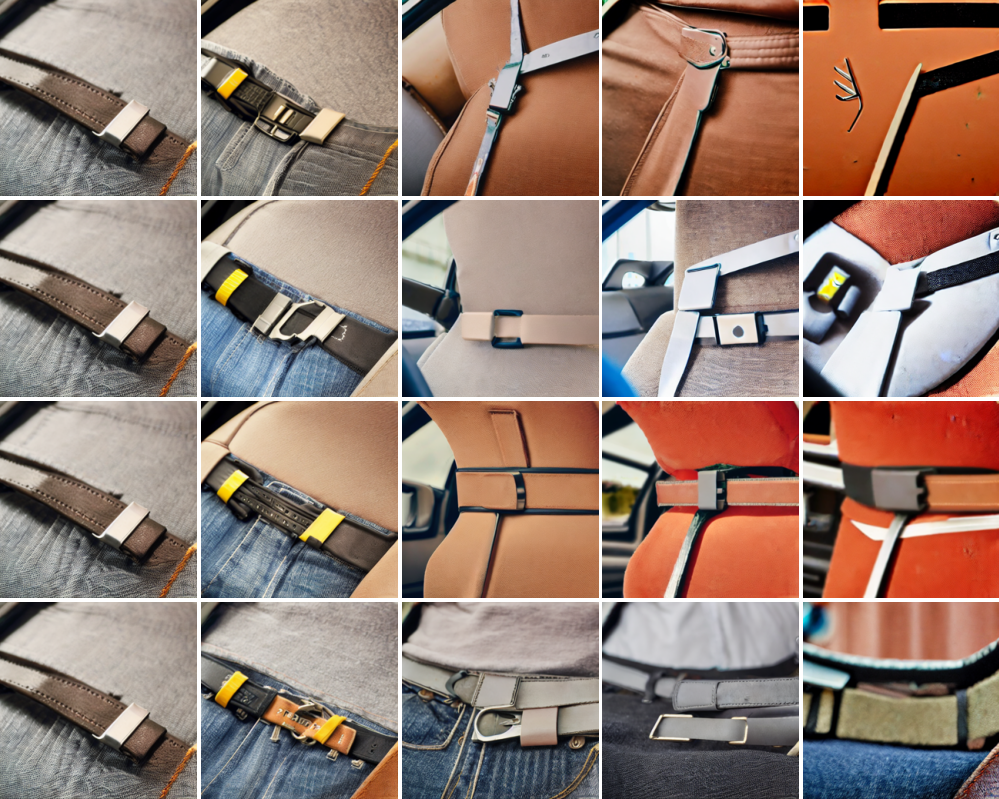}
        \caption{Class seat belt.}
        \label{fig:subfig3}
    \end{subfigure}
    \hfill
    \begin{subfigure}{00.48\linewidth}
        \centering
        \includegraphics[width=\linewidth]{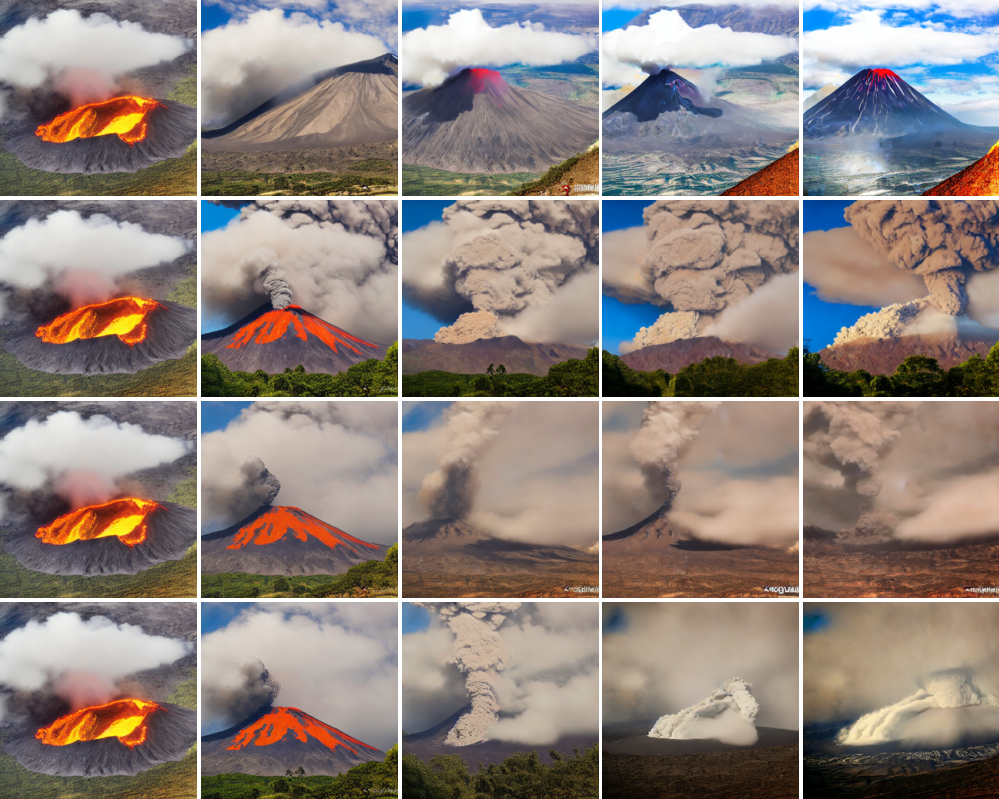}
        \caption{Class volcano.}
        \label{fig:subfig4}
    \end{subfigure}

    \vspace{0.5cm}

    \caption{Examples of generated samples for different class prompts across training epochs, with varying entropy guidance coefficient (\(\omega\)) (left to right) as the training progresses (top to bottom).}

    \label{fig:merged}
\end{figure}

\begin{figure}
    \centering
    \includegraphics[width=1\linewidth]{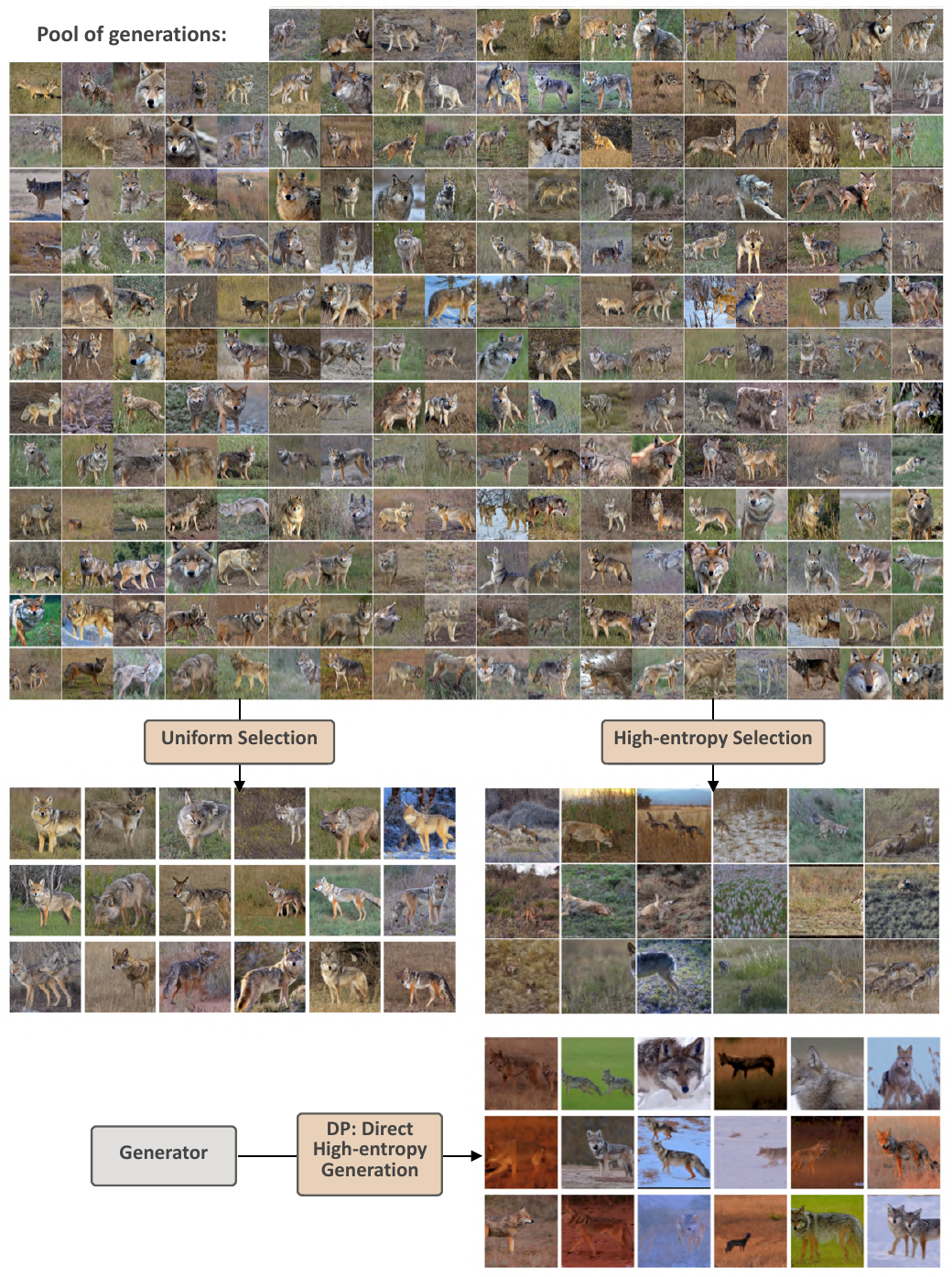}
    \caption{\textbf{Efficient and Diverse Sampling with DP:}  Instead of inefficiently over-sampling and selecting high-entropy examples, DP directly generates high-entropy samples. This not only improves computational efficiency but also results in greater visual diversity.}
    \label{fig:wolfs}
\end{figure}

\begin{figure}
    \centering
    \includegraphics[width=1\linewidth]{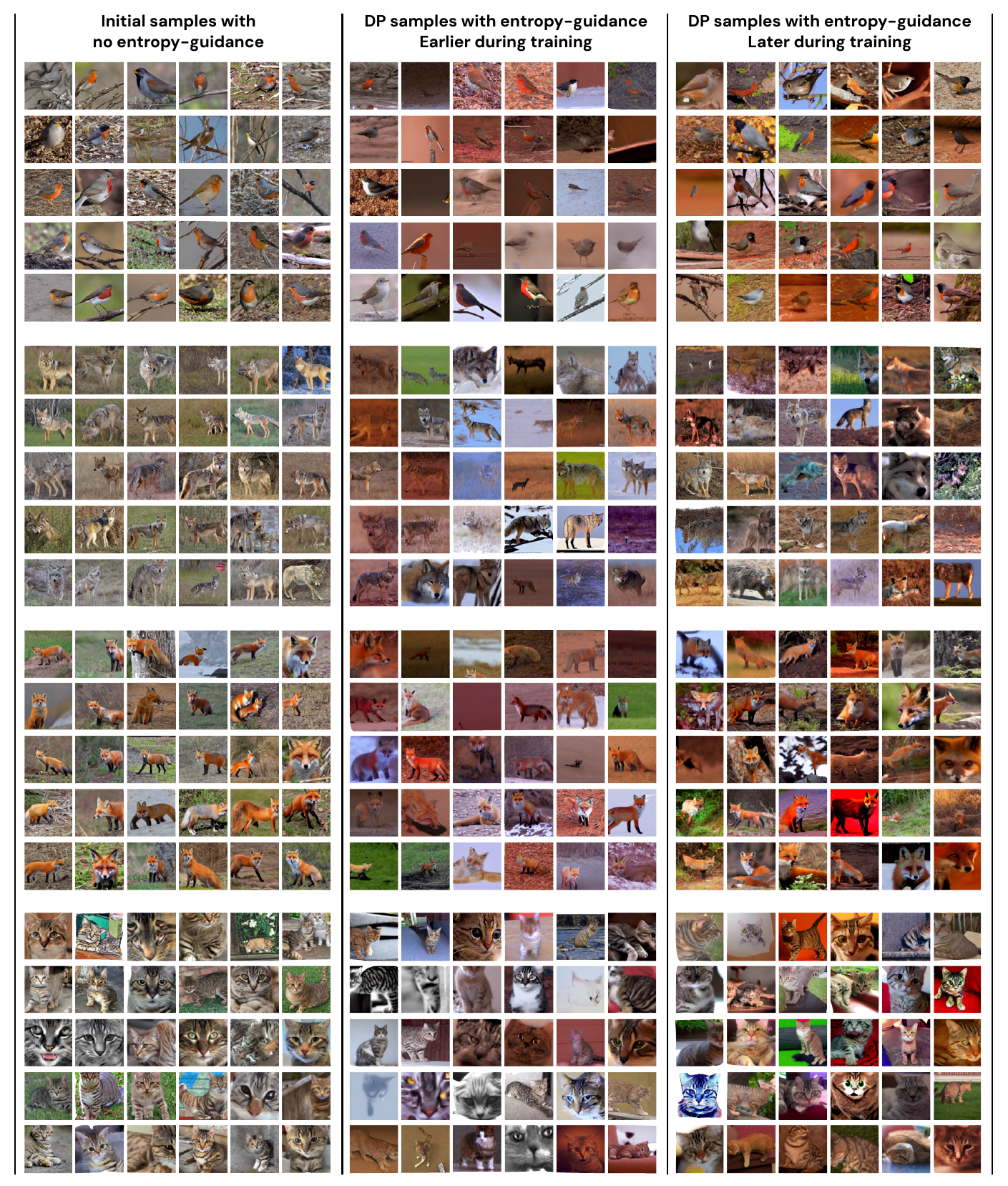}
    \caption{\textbf{Evolution of High-Entropy Samples During Training:} Early-stage generations show mainly color diversity, while later stages exhibit a richer set of transformations, aligning with the classifier's evolving uncertainties.}
    \label{fig:vis_stages}
\end{figure}

\begin{figure}
    \centering
    \includegraphics[width=0.93\linewidth]{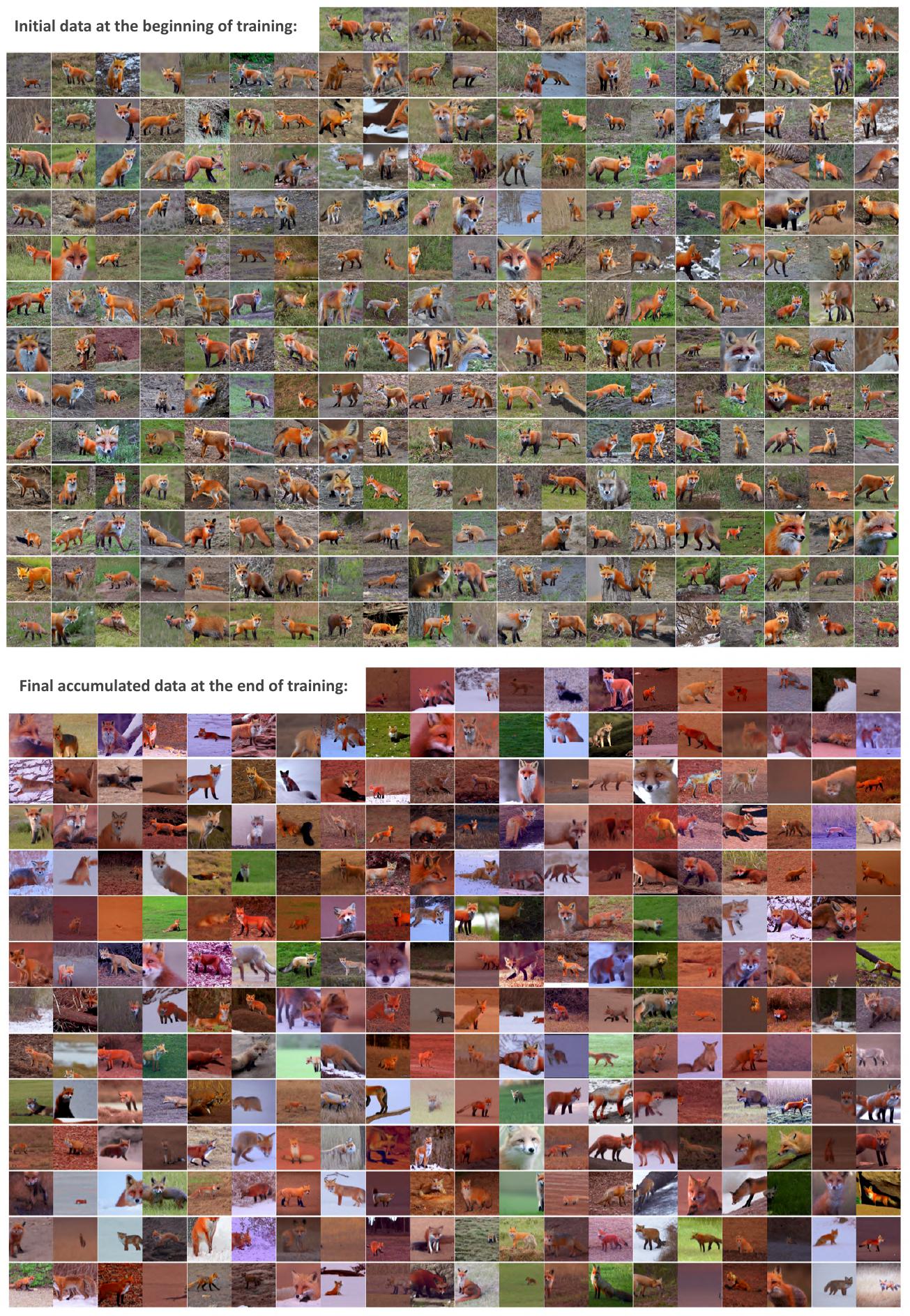}
    \caption{\textbf{Comparison of Initial and Final Training Data:} The initial training data lacks entropy guidance, as the classifier is untrained. By the end of training, the accumulated dataset contains progressively harder/diverse examples.}
    \label{fig:vis_init_vs_final}
\end{figure}

\end{document}